\renewcommand*{\backrefalt}[4]{%
    \ifcase #1 \footnotesize{(Not cited.)}
    \or        \footnotesize{(Cited on page~#2)}%
    \else      \footnotesize{(Cited on pages~#2)}%
    \fi}
\setlist{leftmargin=5.5mm}
\setlist[enumerate]{label=(\alph*)}
\crefname{lemma}{lemma}{lemmas}
\Crefname{lemma}{Lemma}{Lemmas}
\crefname{example}{example}{examples}
\Crefname{example}{Example}{Examples}
\crefname{fact}{fact}{facts}
\Crefname{fact}{Fact}{Facts}
\crefname{theorem}{theorem}{theorems}
\Crefname{theorem}{Theorem}{Theorems}
\crefname{assumption}{assumption}{assumptions}
\Crefname{assumption}{Assumption}{Assumptions}
\crefname{proposition}{proposition}{propositions}
\Crefname{proposition}{Proposition}{Propositions}
\newtheorem{theorem}{Theorem}
\newtheorem{lemma}{Lemma}
\newtheorem{fact}{Fact}
\newcommand{\Pro}{\ensuremath \Pi_{\mathcal{X}}}
\title{\bf DoWG Unleashed: An Efficient Universal Parameter-Free Gradient Descent Method}
\date{}
\author[1]{Ahmed Khaled}
\author[2]{Konstantin Mishchenko}
\author[1]{Chi Jin}
\affil[1]{Princeton University}
\affil[2]{Samsung AI Center}
\begin{document}

\maketitle

\begin{abstract}
This paper proposes a new easy-to-implement \emph{parameter-free} gradient-based optimizer: DoWG (Distance over Weighted Gradients). We prove that DoWG is \emph{efficient}---matching the convergence rate of optimally tuned gradient descent in convex optimization up to a logarithmic factor without tuning any parameters, and \emph{universal}---automatically adapting to both smooth and nonsmooth problems. While popular algorithms following the AdaGrad framework compute a running average of the squared gradients to use for normalization, DoWG maintains a new distance-based weighted version of the running average, which is crucial to achieve the desired properties. 
To complement our theory, we also show empirically that DoWG trains at the edge of stability, and validate its effectiveness on practical machine learning tasks.


\end{abstract}

\section{Introduction}

We study the fundamental optimization problem
\begin{align}
\label{eq:opt}\tag{OPT}
\min_{x \in \mathcal{X}} f(x),
\end{align}
where $f$ is a convex function, and $\mathcal{X}$ is a convex, closed, and bounded subset of $\mathbb{R}^d$. We assume $f$ has at least one minimizer $x_{\ast} \in \mathcal{X}$. We focus on gradient descent and its variants, as they are widely adopted and scale well when the model dimensionality $d$ is large~\citep{bottou16_optim_method_large_scale_machin_learn}. The optimization problem~\eqref{eq:opt} finds many applications: in solving linear systems, logistic regression, support vector machines, and other areas of machine learning~\citep{boyd_vandenberghe_2004}. Equally important, methods designed for (stochastic) convex optimization also influence the intuition for and design of methods for nonconvex optimization-- for example, momentum~\citep{polyak64}, AdaGrad~\citep{duchi11_adagrad}, and Adam~\citep{kingma14_adam} were all first analyzed in the convex optimization framework.

As models become larger and more complex, the cost and environmental impact of training have rapidly grown as well~\citep{sharir20_cost_train_nlp_model,patterson21_carbon_emiss_large_neural_networ_train}. Therefore, it is vital that we develop more efficient and effective methods of solving machine learning optimization tasks. One of the chief challenges in applying gradient-based methods is that they often require tuning one or more stepsize parameters~\citep{goodfellow2016_deep_learning}, and the choice of stepsize can significantly influence a method's convergence speed as well as the quality of the obtained solutions, especially in deep learning~\citep{wilson17_margin_value_adapt_gradien_method_machin_learn}.

The cost and impact of hyperparameter tuning on the optimization process have led to significant research activity in designing parameter-free and adaptive optimization methods in recent years, see e.g.~\citep{orabona2020parameter,carmon22_makin_sgd_param_free} and the references therein.

We say an algorithm is \emph{universal} if it adapts to many different problem geometries or regularity conditions on the function $f$~\citep{nesterov14_universal_grad_methods,levy18_onlin_adapt_method_univer_accel,grimmer22_optim_univer_first_order_method}. In this work, we focus on two regularity conditions: (a) $f$ Lipschitz and (b) $f$ smooth. Lipschitz functions have a bounded rate of change, that is, there exists some $G > 0$ such that for all $x, y \in \mathcal{X}$ we have $\abs{f(x) - f(y)} \leq G \norm{x-y}$. The Lipschitz property is beneficial for the convergence of gradient-based optimization algorithms. They converge even faster on smooth functions, which have continuous derivatives; that is, there exists some $L > 0$ such that for all $x, y \in \mathcal{X}$ we have $\norm{\nabla f(x) - \nabla f(y)} \leq L \norm{x-y}$. Smoothness leads to faster convergence of gradient-based methods than the Lipschitz property. Universality is a highly desirable property because in practice the same optimization algorithms are often used for both smooth and nonsmooth optimization (e.g.\ optimizing both ReLU and smooth networks).

The main question of our work is as follows:
\begin{quote}\centering
Can we design a universal, parameter-free gradient descent method for \eqref{eq:opt}?
\end{quote}

Existing universal variants of gradient descent either rely on line search~\citep{nesterov14_universal_grad_methods,grimmer22_optim_univer_first_order_method}, bisection subroutines~\citep{carmon22_makin_sgd_param_free}, or are not parameter-free~\citep{hazan19_revis_polyak_step_size,levy18_onlin_adapt_method_univer_accel,kavis19_unixg}. Line search algorithms are theoretically strong, achieving the optimal convergence rates in both the nonsmooth and smooth settings with only an extra log factor. Through an elegant application of bisection search, \citet{carmon22_makin_sgd_param_free} design a parameter-free method whose convergence is only double-logarithmically worse than gradient descent with known problem parameters. However, this method requires resets, i.e.\ restarting the optimization process many times, which can be very expensive in practice. Therefore, we seek a universal, parameter-free gradient descent method for \eqref{eq:opt} with \textbf{no search subroutines.}


\textbf{Our contributions.} We provide a new algorithm that meets the above requirements. Our main contribution is \textbf{a new universal, parameter-free gradient descent method with no search subroutines.} Building upon the recently proposed Distance-over-Gradients (DoG) algorithm~\citep{ivgi23_dog_is_sgds_best_frien}, we develop a new method, DoWG (Algorithm~\ref{alg:dowg}), that uses a different stepsize with adaptively weighted gradients. We show that DoWG automatically matches the performance of gradient descent on \eqref{eq:opt} up to logarithmic factors with no stepsize tuning at all. This holds in both the nonsmooth setting (Theorem~\ref{thm:dowg-nonsmooth}) and the smooth setting (Theorem~\ref{thm:dowg-smooth}). Finally, we show that DoWG is competitive on real machine learning tasks (see Section~\ref{sec:experimental-results}).

\begin{table}[ht]
\centering
\begin{threeparttable}
\begin{adjustbox}{width=\textwidth}
\begin{tabular}{@{}lcccc@{}}
\toprule \bf
Algorithm & \bf No search & \bf Parameter-free & \bf Universal & \bf GD framework \\ \midrule
Polyak stepsize~\citep{polyak87,hazan19_revis_polyak_step_size}      & \cmark         & \xmark          & \cmark & \cmark \\
\addlinespace[0.5em]
Coin betting with normalization       & \cmark         & \cmark          & \cmark\tnote{\color{blue}(*)} & \xmark \\
\multicolumn{5}{l}{\citep{orabona16_coin_betting,orabona2020parameter,orabona23_normal_gradien_all}}\\
\addlinespace[0.5em]
Nesterov line search~\citep{nesterov14_universal_grad_methods}       & \xmark         & \cmark          & \cmark & \cmark \\
\addlinespace[0.5em]
AdaGrad       & \cmark         & \xmark          & \cmark & \cmark \\
\multicolumn{5}{l}{\citep{duchi11_adagrad,levy18_onlin_adapt_method_univer_accel,ene20_adapt_gradien_method_const_convex}}\\
\addlinespace[0.5em]
Adam    & \cmark         & \xmark          & \cmark & \cmark \\
\multicolumn{5}{l}{\citep{kingma14_adam,li23_conver_adam_under_relax_assum}}\\
\addlinespace[0.5em]
Bisection search~\citep{carmon22_makin_sgd_param_free} & \xmark         & \cmark          & \cmark & \cmark \\
\addlinespace[0.5em]
D-Adaptation~\citep{defazio23_learn_rate_free_learn_by_d_adapt}      & \cmark         & \cmark          & \xmark & \cmark \\
\addlinespace[0.5em]
DoG~\citep{ivgi23_dog_is_sgds_best_frien}      & \cmark         & \cmark          & \cmark \tnote{\color{blue}(*)} 
                                                               & \cmark \\
\addlinespace[0.5em]
DoWG (\textbf{new, this paper!})     & \cmark         & \cmark          & \cmark & \cmark \\\bottomrule
\end{tabular}
\end{adjustbox}
\begin{tablenotes}
\item[{\color{blue}(*)}] Result appeared after the initial release of this paper.
\end{tablenotes}
\caption{A comparison of different adaptive algorithms for solving~\eqref{eq:opt}. "Universal" means that the algorithm can match the rate of gradient descent on both smooth and nonsmooth objectives up to polylogarithmic factors. "No search" means the algorithm does not reset. "GD framework" refers to algorithms that follow the framework of Gradient Descent.}
\label{tab:comparison_table}
\end{threeparttable}
\end{table}

\section{Related Work}


There is a lot of work on adaptive and parameter-free approaches for optimization. We summarize the main properties of the algorithms we compare against in Table~\ref{tab:comparison_table}. We enumerate some of the major approaches below:

\textbf{Polyak stepsize.} When $f_{*} = f(x_{*})$ is known, the Polyak stepsize~\citep{polyak87} is a theoretically-grounded, adaptive, and universal method~\citep{hazan19_revis_polyak_step_size}. When $f_{*}$ is not known, \citet{hazan19_revis_polyak_step_size} show that an adaptive re-estimation procedure can recover the optimal convergence rate up to a log factor when $f$ is Lipschitz. \citet{loizou20_stoch_polyak_step_size_sgd} study the Polyak stepsize in stochastic non-convex optimization. \citet{orvieto22_dynam_sgd_with_stoch_polyak_steps} show that a variant of the Polyak stepsize with decreasing stepsizes can recover the convergence rate of gradient descent, provided the stepsize is initialized properly. Unfortunately, this initialization requirement makes the method not parameter-free.

\textbf{The doubling trick.} The simplest way to make an algorithm parameter-free is the doubling-trick. For example, for gradient descent for $L$-smooth and convex optimization, the stepsize $\eta = \frac{1}{L}$ results in the convergence rate of
\begin{align}
\label{eq:6}
f(\hat{x}) - f_{\ast} = \mathcal{O} \left( \frac{D_0 L^2}{T}  \right),
\end{align}
where $D_0 = \norm{x_0 - x_{\ast}}$. We may therefore start with a small estimate $L_0$ of the smoothness constant $L$, run gradient descent for $T$ steps, and return the average point. We restart and repeat this for $N$ times, and return the point with the minimum function value. So long as $N \geq \log \frac{L}{L_0}$, we will return a point with loss satisfying \cref{eq:6} at the cost of only an additional logarithmic factor. This trick and similar variants of it appear in the literature on prediction with expert advice and online learning~\citep{cesabianchi97,cesabianchi06,hazan07}. It is not even needed to estimate $N$ in some cases, as the restarting can be done adaptively~\citep{streeter12}. In practice, however, the performance of doubling trick suffers from restarting the optimization process and throwing away useful that could be used to guide the algorithm.

\textbf{Parameter-free methods.} Throughout this paper, we use the term ``parameter-free algorithms'' to describe optimization algorithms that do not have any tuning parameters. We specifically consider only the deterministic setting with a compact domain. As mentioned before, \citet{carmon22_makin_sgd_param_free} develop an elegant parameter-free and adaptive method based on bisection search. Bisection search, similar to grid search, throws away the progress of several optimization runs and restarts, which may hinder their practical performance.  \citet{ivgi23_dog_is_sgds_best_frien,defazio23_learn_rate_free_learn_by_d_adapt} recently developed variants of gradient descent that are parameter-free when $f$ is Lipschitz. However, D-Adaptation \citep{defazio23_learn_rate_free_learn_by_d_adapt} has no known guarantee under smoothness, while DoG~\citep{ivgi23_dog_is_sgds_best_frien} was only recently (after the initial release of this paper) shown to adapt to smoothness. We compare against the convergence guarantees of DoG in \Cref{sec:dowg}. For smooth functions, \citet{malitsky19_adapt_gradien_descen_without_descen} develop AdGD, a method that efficiently estimates the smoothness parameter on-the-fly from the training trajectory. AdGD is parameter-free and matches the convergence of gradient descent but has no known guarantees for certain classes of Lipschitz functions. A proximal extension of this method has been proposed by \citet{latafat2023adaptive}.


\textbf{Parameter-free methods in online learning.} In the online learning literature, the term ``parameter-free algorithms'' was originally used to describe another class of algorithms that adapt to the unknown distance to the optimal solution (but can still have other tuning parameters such as Lipschitz constant). When the Lipschitz parameter is known, approaches from online convex optimization such as coin betting~\citep{orabona16_coin_betting}, exponentiated gradient~\citep{streeter12,Orabona13}, and others~\citep{mcmahan14_uncon_onlin_linear_learn_hilber_spaces,orabona2020parameter,orabona21_param_free_stoch_optim_variat_coher_funct,orabona17_train_deep_networ_without_learn} yield rates that match gradient descent up to logarithmic factors. Knowledge of the Lipschitz constant can be removed either by using careful restarting schemes~\citep{mhammedi2019lipschitz,mhammedi2020lipschitz}, or adaptive clipping on top of coin betting~\citep{pmlr-v99-cutkosky19a}. For optimization in the deterministic setting, it is later clarified that, by leveraging the normalization techniques developed in \citep{levy17_onlin_to_offlin_conver_univer}, the aforementioned online learning algorithms can be used without knowing other tuning parameters (i.e., achieve ``parameter-free'' in the sense of this paper) for optimizing both Lipschitz functions \citep{orabona21_param_free_stoch_optim_variat_coher_funct} and smooth functions \citep{orabona23_normal_gradien_all}. Concretely, as shown in \cite{orabona23_normal_gradien_all} (which appears after the initial release of this paper), combining algorithms in \cite{streeter12, orabona16_coin_betting} with normalization techniques \citep{levy17_onlin_to_offlin_conver_univer} yields new algorithms that are also search-free, parameter-free (in the sense of this paper), and universal. However, these algorithms are rather different from DoWG in algorithmic style: these algorithms only use normalized gradients while DoWG does use the magnitudes of the gradients; DoWG falls in the category of gradient descent algorithms with adaptive learning rate, while these algorithms do not.

\textbf{Line search.} As mentioned before, line-search-based algorithms are universal and theoretically grounded~\citep{nesterov14_universal_grad_methods} but are often expensive in practice~\citep{malitsky19_adapt_gradien_descen_without_descen}.

\textbf{AdaGrad family of methods.} \citet{li18_conver_stoch_gradien_descen_with_adapt_steps} study a variant of the AdaGrad stepsizes in the stochastic convex and non-convex optimization and show convergence when the stepsize is tuned to depend on the smoothness constant. \citet{levy18_onlin_adapt_method_univer_accel} show that when the stepsize is tuned properly to the diameter of the domain $\mathcal{X}$ in the constrained convex case, AdaGrad-Norm adapts to smoothness. \citet{ene20_adapt_gradien_method_const_convex} extend this to AdaGrad and other algorithms, and also to variational inequalities. \citet{ward18_adagr_steps,traore20_sequen_conver_adagr_algor_smoot_convex_optim} show the convergence of AdaGrad-Norm for any stepsize for non-convex (resp. convex) optimization, but in the worst case the dependence on the smoothness constant is worse than gradient descent.   \citet{liu22_conver_adagr_r} show that AdaGrad-Norm converges in the unconstrained setting when $f$ is quasi-convex, but their guarantee is worse than gradient descent. We remark that all AdaGrad-style algorithms mentioned above require tuning stepsizes, and are thus not parameter-free.

\textbf{Alternative justifications for normalization.} There are other justifications for why adaptive methods work outside of universality. \citet{zhang19_why_gradien_clipp_accel_train} study a generalized smoothness condition and show that in this setting tuned clipped gradient descent can outperform gradient descent. Because the effective stepsize used in clipped gradient descent is only a constant factor away from the effective stepsize in normalized gradient descent, \citep{zhang19_why_gradien_clipp_accel_train}, also show that this improvement holds for NGD. \citet{zhang19_why_are_adapt_method_good_atten_model} observe that gradient clipping and normalization methods outperform SGD when the stochastic gradient noise distribution is heavy-tailed. However, \citet{kunstner23_noise} later observe that adaptive methods still do well even when the effect of the noise is limited.

\section{Algorithms and theory}

In this section we first review the different forms of adaptivity in gradient descent and normalized gradient descent, and then introduce our proposed algorithm DoWG. The roadmap for the rest of the paper is as follows: we first review the convergence of gradient descent in the Lipschitz and smooth settings, and highlight the problem of divergence under stepsize misspecification, and how normalization fixes that. Then, we introduce our main new algorithm, DoWG, and give our main theoretical guarantees for the algorithm. Finally, we evaluate the performance of DoWG on practical machine learning problems.

\subsection{Baselines: gradient descent and normalized gradient descent}
We start our investigation with the standard Gradient Descent (GD) algorithm:
\begin{align}
\label{eq:GD}\tag{GD}
x_{t+1} = \Pi_{\mathcal{X}} (x_t - \eta \nabla f(x_t)),
\end{align}
where $\Pi_{\mathcal{X}}$ is the projection on $\mathcal{X}$ (when $\mathcal{X} = \mathbb{R}^d$, this is just the identity operator). The iterations~\eqref{eq:GD} require specifying the stepsize $\eta > 0$. When $f$ is $G$-Lipschitz, gradient descent achieves the following standard convergence guarantee:
\begin{theorem}\label{thm:gd-nonsmooth}
Suppose that $f$ is convex with minimizer $x_{\ast}$. Let $f_{\ast} = f(x_{\ast})$. Let $D_0 \eqdef \norm{x_0 - x_{\ast}}$ be the initial distance to the optimum. Denote by $\hat{x}_T = \frac{1}{T} \sum_{t=0}^{T-1} x_t$ the average iterate returned by GD. Then:
\begin{itemize}
\item \citep{bubeck14_convex_optim} If $f$ is $G$-Lipschitz, the average iterate satisfies for any stepsize $\eta > 0$:
\begin{align}
\label{eq:11}
f(\bar{x}_T) - f_{*} \leq \frac{D_0^2}{\eta T} + \frac{\eta G^2}{2},
\end{align}
\item \citep{nesterov18_lectures_cvx_opt} If $f$ is $L$-smooth, then for all $\eta < \frac{2}{L}$ the average iterate satisfies
\begin{align}
\label{eq:19}
f(\hat{x}_T) - f_{\ast} \leq \frac{2 L D_0^2}{4 + T \eta L (2-L \eta)} .
\end{align}
\end{itemize}
\end{theorem}

Minimizing \cref{eq:11} over $\eta$ gives $f(\bar{x}_T) - f_{*} = \mathcal{O} \left( \frac{D_0 G}{\sqrt{T}}  \right)$ with $\eta = \frac{D_0}{G \sqrt{T}}$. We have several remarks to make about this rate for gradient descent. First, the optimal stepsize depends on both the distance to the optimum $D_0$ and the Lipschitz constant $G$, and in fact, this rate is in general optimal~\citep[Theorem 3.2.1]{nesterov18_lectures_cvx_opt}. Moreover, if we misspecify $D_0$ or $G$ while tuning $\eta$, this does not in general result in divergence but may result in a slower rate of convergence. On the other hand, for the smooth setting the optimal stepsize is $\eta = \frac{1}{L}$ for which $f(x_T) - f_{*} \leq \mathcal{O} \left( \frac{L D_0^2}{T}  \right)$. Unfortunately, to obtain this rate we have to estimate the smoothness constant $L$ in order to choose a stepsize $\eta < \frac{2}{L}$, and this dependence is \emph{hard}: if we overshoot the upper bound $\frac{2}{L}$, the iterations of gradient descent can diverge very quickly, as shown by Figure~\ref{fig:comparison}. Therefore, GD with a constant stepsize cannot be \emph{universal}: we have to set the stepsize differently for smooth and nonsmooth objectives.

\begin{figure}[htbp]
    \centering
    \begin{subfigure}{0.4\textwidth}
        \centering
        \includegraphics[width=\textwidth]{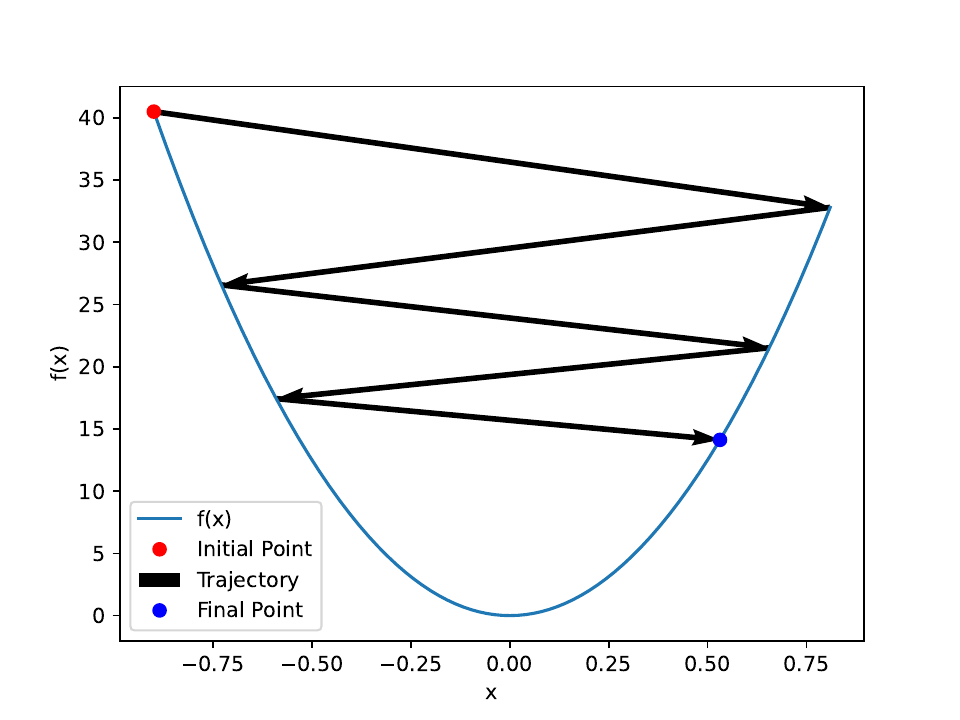}
        \caption{Convergence with $\eta = \frac{1.9}{L}$}
        \label{fig:convergence}
    \end{subfigure}
    \hspace{1cm}
    \begin{subfigure}{0.4\textwidth}
        \centering
        \includegraphics[width=\textwidth]{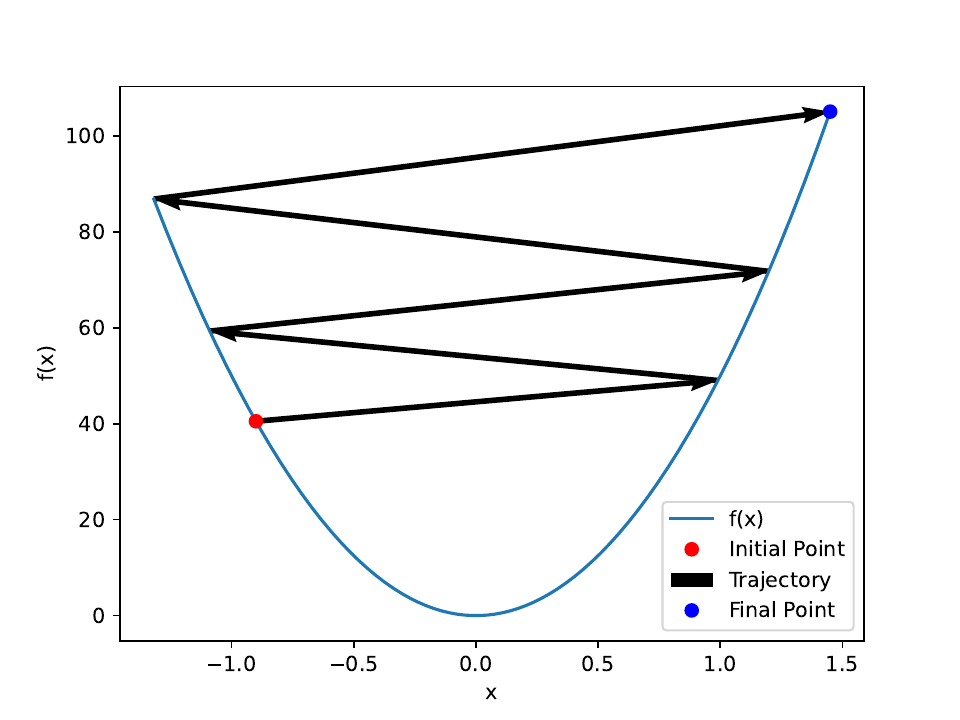}
        \caption{Divergence with $\eta = \frac{2.1}{L}$}
        \label{fig:divergence}
    \end{subfigure}
    \caption{Two trajectories of gradient descent on the one-dimensional quadratic $f(x) = \frac{L x^2}{2}$, with $L=100$.}
    \label{fig:comparison}
\end{figure}

Normalized Gradient Descent (NGD)~\citep{shor2012minimization} consists of iterates of the form
\begin{align}
\label{eq:NGD-iteration}\tag{NGD}
  x_{t+1} = \Pi_{\mathcal{X}} \left( x_t - \eta \frac{\nabla f(x_t)}{\norm{\nabla f(x_t)}} \right).
\end{align}

The projection step above is not necessary, and the results for NGD also hold in the unconstrained setting where the projection on $\mathcal{X} = \mathbb{R}^d$ is just the identity. NGD has many benefits: it can escape saddle points that GD may take arbitrarily long times to escape~\citep{murray17_revis_normal_gradien_descen}, and can minimize functions that are quasi-convex and only locally Lipschitz~\citep{hazan15_beyon_convex}. One of the main benefits of normalized gradient descent is that normalization makes the method scale-free: multiplying $f$ by a constant factor $\alpha>0$ and minimizing $\alpha f$ does not change the method's trajectory at all. This allows it to adapt to the Lipschitz constant $G$ in nonsmooth optimization as well as the smoothness constant $L$ for smooth objectives, as the following theorem states:
\begin{theorem}\label{thm:ngd} Under the same conditions as Theorem~\ref{thm:gd-nonsmooth}, the iterations generated by generated by~\eqref{eq:NGD-iteration} satisfy after $T$ steps satisfy:
\begin{itemize}
\item \citep{nesterov18_lectures_cvx_opt} If $f$ is $G$-Lipschitz, the minimal function suboptimality satisfies
\begin{align}
\label{eq:10}
\min_{k \in \left\{ 0, 1, \ldots, T-1 \right\}} \left[ f(x_k) - f_{*} \right] \leq G \left[ \frac{D_0^2}{2\eta T} + \frac{\eta}{2} \right],
\end{align}
where $D_0 \eqdef \norm{x_0 - x_{*}}$.
\item \citep{levy17_onlin_to_offlin_conver_univer,grimmer19} If $f$ is $L$-Lipschitz, the minimal function suboptimality satisfies
\begin{align}
\label{eq:7}
\min_{k=0, \ldots, T-1} \left[ f(x_k) - f_{*} \right] &\leq \frac{L}{2} \left[ \frac{D_0^2}{2\eta T} + \frac{\eta}{2} \right]^2.
\end{align}
\end{itemize}
\end{theorem}

Tuning \cref{eq:10} in $\eta$ gives $\eta = \frac{D_0}{\sqrt{T}}$, and the stepsize is also optimal for \cref{eq:7}. This gives a convergence rate of $\frac{D_0 G}{\sqrt{T}}$ when $f$ is Lipschitz and $\frac{D_0^2 L}{T}$ when $f$ is smooth. Observe that NGD matches the dependence of gradient descent on $G$ and $L$ without any knowledge of it. Furthermore that, unlike GD where the optimal stepsize is $\frac{1}{L}$ in the smooth setting and $\frac{D_0}{G \sqrt{T}}$ in the nonsmooth setting. The optimal stepsize for NGD is the same in both cases. Therefore, NGD is \emph{universal}: the same method with the same stepsize adapts to nonsmooth and smooth objectives. Moreover, misspecification of the stepsize in NGD does not result in divergence, but just slower convergence. Another interesting property is that we only get a guarantee on the best iterate: this might be because NGD is non-monotonic, as Figure~\ref{fig:mushrooms-ngd} (a) shows.

\begin{figure}[htbp]
    \centering
    \includegraphics[scale=0.18]{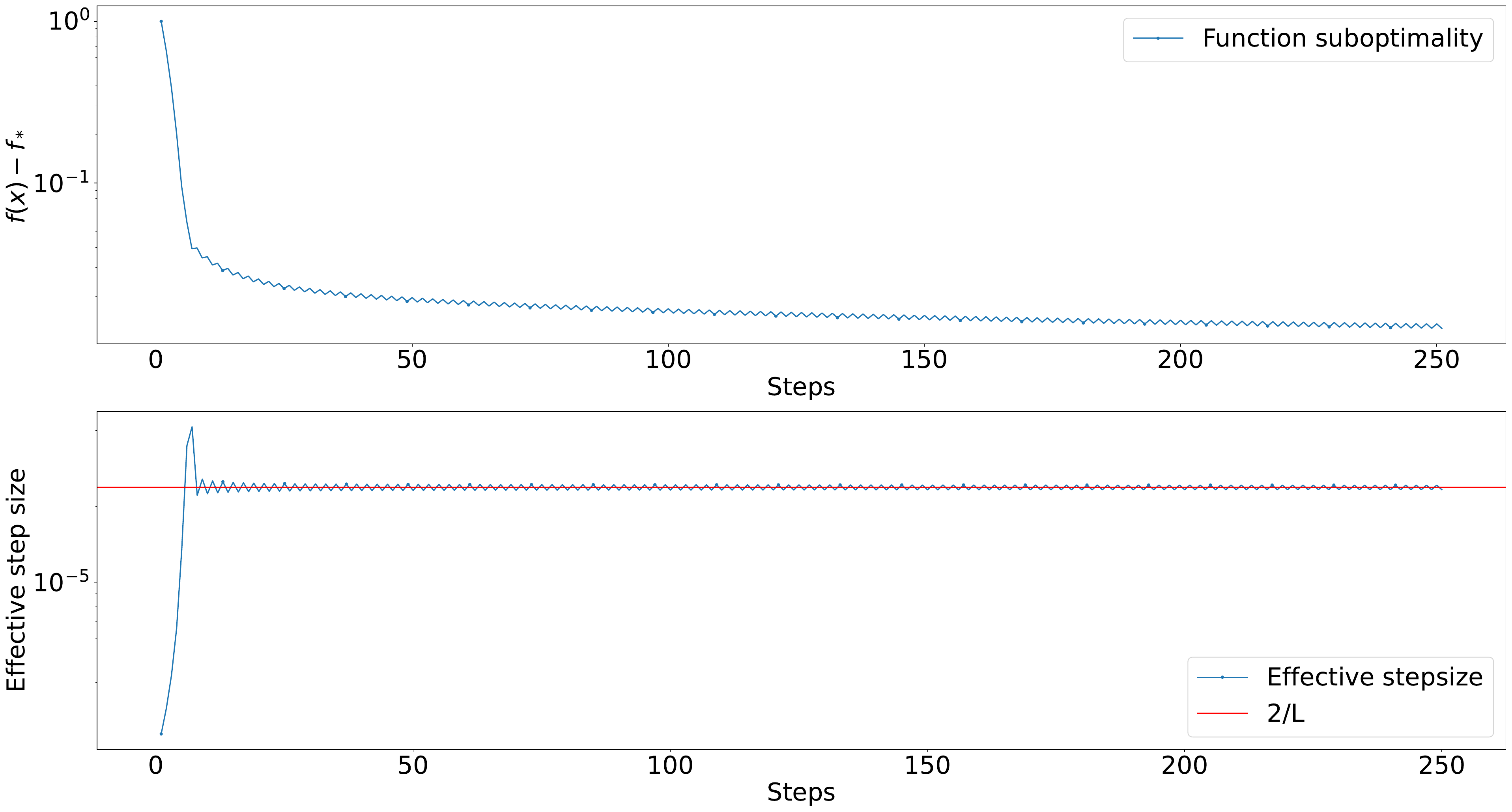}
    \caption{NGD iterations on $\ell_2$-regularized linear regression on the \textrm{mushrooms} dataset from LibSVM~\citep{chang11_libsvm} with $\eta = 0.1$. Top (a) shows the function suboptimality over time. Observe that as the number of iterations grow, the method becomes non-monotonic. Bottom (b) shows the effective stepsize $\eta_{\mathrm{eff}, t} = \frac{0.1}{\norm{\nabla f(x_t)}}$ over time.}
    \label{fig:mushrooms-ngd}
\end{figure}

\textbf{Edge of Stability Phenomena.} We may reinterpret NGD with stepsize $\eta$ as simply GD with a time-varying ``effective stepsize'' $\eta_{\mathrm{eff}, t} = \frac{\eta}{\norm{\nabla f(x_t)}}$. We plot this effective stepsize for an $\ell_2$-regularized linear regression problem in Figure~\ref{fig:mushrooms-ngd} (b). Observe that the stepsize sharply increases, then decreases until it starts oscillating around $\frac{2}{L}$. Recall that $\frac{2}{L}$ is the edge of stability for gradient descent: its iterates diverge when the stepsize crosses this threshold. \citet{arora22_under_gradien_descen_edge_stabil_deep_learn} observe this phenomenon for NGD, and give a detailed analysis of it under several technical assumptions and when the iterates are close to the manifold of local minimizers.

Theorem~\ref{thm:ngd} offers an alternative, global, and less explicit explanation of this phenomenon: NGD matches the optimal gradient descent rate, and in order to do so it must drive the effective stepsize to be large. Specifically, suppose that we use the optimal stepsize $\eta = \frac{D_0}{\sqrt{T}}$, and call the best iterate returned by NGD $x_{\tau}$. Then $x_{\tau}$ satisfies $f(x_{\tau}) - f_{\ast} \leq \frac{D_0^2 L}{T}$ and therefore by smoothness
\begin{align*}\small
\norm{\nabla f(x_{\tau})} \leq \sqrt{2L \left( f(x_{\tau}) - f_{*} \right)} \leq \sqrt{\frac{L^2 D_0^2}{T}} = \frac{L D_0}{\sqrt{T}} = L \eta.
\end{align*}
This implies $\eta_{\mathrm{eff}, \tau} = \frac{\eta}{\norm{\nabla f(x_t)}} \geq \frac{1}{L}$. Therefore the effective stepsize at convergence is forced to grow to $\Omega \left( \frac{1}{L} \right)$. But if the effective stepsize increases too much and crosses the threshold $\frac{2}{L}$, the gradient norms start diverging, forcing the effective stepsize back down. Thus, NGD is \emph{self-stabilizing}. We note that Edge of Stability phenomenon is not unique to NGD, and GD itself trains at the edge of stability for more complicated models where the smoothness also varies significantly over training~\citep{cohen21_gradien_descen_neural_networ_typic,damian22_self_stabil}.


\subsection{DoWG}

\label{sec:dowg}

We saw in the last section that NGD adapts to both the Lipschitz constant $G$ and the smoothness $L$, but we have to choose $\eta$ to vary with the distance to the optimum $D_0 = \norm{x_0 - x_{*}}$. In this section, we develop a novel algorithm that adaptively estimates the distance to the optimum, and attains the optimal convergence rate of gradient descent for constrained convex and smooth optimization up to a logarithmic factor. Our algorithm builds upon the recently proposed Distance over Gradients (DoG) algorithm developed by \citet{ivgi23_dog_is_sgds_best_frien}. We call the new method DoWG (Distance over Weighted Gradients), and we describe it as Algorithm~\ref{alg:dowg} below.

\begin{algorithm}[H]
 \SetAlgoLined
 \textbf{Input}: initial point $x_0 \in \mathcal{X}$ Initial distance estimate $r_{\epsilon} > 0$.\\
 \textbf{Initialize} $v_{-1} = 0$, $r_{-1} = r_{\epsilon}$. \\
 \For{$t = 0, 1, 2, \dots, T-1$}{
   Update distance estimator: $\bar{r}_t \gets \max \left( \norm{x_t - x_0}, \ \bar{r}_{t-1} \right)$ \\
   Update weighted gradient sum: $v_t \gets v_{t-1} + \bar{r}_t^2 \norm{\nabla f(x_t)}^2$ \\
   Set the stepsize: $\eta_t \gets \frac{\bar{r}_t^2}{\sqrt{v_t}}$ \\
   Gradient descent step: $x_{t+1} \gets \Pro(x_t - \eta_t \nabla f(x_t))$ \\
 }
\caption{DoWG: Distance over Weighted Gradients}
\label{alg:dowg}
\end{algorithm}

DoWG uses the same idea of estimating the distance from the optimum by using the distance from the initial point as a surrogate, but instead of using the square root of the running gradient sum $G_t = \sum_{k=0}^t \sqn{\nabla f(x_k)}$ as the normalization, DoWG uses the square root of the weighted gradient sum $v_t = \sum_{k=0}^t \bar{r}_k^2 \sqn{\nabla f(x_k)}$. Observe that because the estimated distances $\bar{r}_t$ are monotonically increasing, later gradients have a larger impact on $v_t$ than earlier ones compared to $G_t$. Therefore, we may expect this to aid the method in adapting to the local properties of the problem once far away from the initialization $x_0$. We note that using a weighted sum of gradients is not new: AcceleGrad~\citep{levy18_onlin_adapt_method_univer_accel} uses time-varying polynomial weights and Adam~\citep{kingma14_adam} uses exponentially decreasing weights. The difference is that DoWG chooses the weights adaptively based on the running distance from the initial point. This use of distance-based weighted averaging is new, and we are not aware of any previous methods that estimate the running gradient sum in this manner.


\textbf{Nonsmooth analysis.} The next theorem shows that DoWG adapts to the Lipschitz constant $G$ and the diameter $D$ of the set $\mathcal{X}$ if the function $f$ is nonsmooth but $G$-Lipschitz. We use the notation $\log_+ x = \log x + 1$ following \citep{ivgi23_dog_is_sgds_best_frien}.

\begin{theorem}\label{thm:dowg-nonsmooth}
(DoWG, Lipschitz $f$). Suppose that the function $f$ is convex, $G$-Lipschitz, and has a minimizer $x_{*} \in \mathcal{X}$. Suppose that the domain $\mathcal{X}$ is a closed convex set of (unknown) diameter $D > 0$. Let $r_{\epsilon} < D$. Then the output of Algorithm~\ref{alg:dowg} satisfies for some $t \in \left\{ 0, 1, \ldots, T-1 \right\}$
\begin{align*}
  f(\bar{x}_t) - f_{*} = \mathcal{O} \left[ \frac{G D}{\sqrt{T}} \log_+ \frac{D}{r_\epsilon}  \right],
\end{align*}
where $\bar{x}_t \eqdef \frac{1}{\sum_{k=0}^{t-1} \overline{r}_k^2}  \sum_{k=0}^{t-1} \overline{r}_k^2 x_k$ is a weighted average of the iterates returned by the algorithm.
\end{theorem}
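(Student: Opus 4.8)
The plan is to run the standard projected-subgradient argument but with the DoWG weights $\bar{r}_t^2$ built in from the start. Beginning from nonexpansiveness of $\Pro$ (or the identity in the unconstrained case) and expanding the square exactly as in the proof of Theorem~\ref{thm:ngd-stronger}, I would obtain the one-step inequality
\[
\langle \nabla f(x_t), x_t - x_* \rangle \le \frac{\norm{x_t-x_*}^2 - \norm{x_{t+1}-x_*}^2}{2\eta_t} + \frac{\eta_t \norm{\nabla f(x_t)}^2}{2}.
\]
Substituting the DoWG stepsize $\eta_t = \bar{r}_t^2/\sqrt{v_t}$, multiplying through by $\bar{r}_t^2$, and invoking convexity ($\langle \nabla f(x_t), x_t - x_* \rangle \ge f(x_t)-f_*$) gives the weighted per-step bound
\[
\bar{r}_t^2\,(f(x_t)-f_*) \le \frac{\sqrt{v_t}}{2}\left(\norm{x_t-x_*}^2 - \norm{x_{t+1}-x_*}^2\right) + \frac{\bar{r}_t^4\norm{\nabla f(x_t)}^2}{2\sqrt{v_t}}.
\]
This is the DoWG analogue of an adaptive regret inequality; summing it over $k$ is what controls the weighted suboptimality appearing in $f(\bar{x}_t)-f_*$ through convexity of $f$.

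Next I would sum over $k=0,\dots,t-1$ and treat the two resulting sums separately. For the distance sum I would apply summation by parts: since $\sqrt{v_k}$ is nondecreasing and each $\norm{x_k-x_*}^2 \le D^2$ by the diameter assumption, the Abel rearrangement collapses to at most $\tfrac12 D^2\sqrt{v_{t-1}}$. For the gradient sum I would use the identity $\bar{r}_k^2\norm{\nabla f(x_k)}^2 = v_k - v_{k-1}$ together with $\bar{r}_k^2 \le D^2$ and the elementary inequality $\frac{v_k-v_{k-1}}{\sqrt{v_k}} \le 2(\sqrt{v_k}-\sqrt{v_{k-1}})$, which telescopes (using $v_{-1}=0$) to at most $D^2\sqrt{v_{t-1}}$. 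Combining these, $\sum_{k=0}^{t-1}\bar{r}_k^2(f(x_k)-f_*) \le \tfrac32 D^2\sqrt{v_{t-1}}$, and since $\bar{x}_t$ is precisely the $\bar{r}_k^2$-weighted average, convexity yields
\[
f(\bar{x}_t) - f_* \le \frac{3D^2\sqrt{v_{t-1}}}{2\,W_t}, \qquad W_t \eqdef \sum_{k=0}^{t-1}\bar{r}_k^2 .
\]
Finally $G$-Lipschitzness gives $\sqrt{v_{t-1}} \le G\sqrt{W_t}$, so $f(\bar{x}_t)-f_* \le \tfrac32 D^2 G/\sqrt{W_t}$.

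The remaining step is to exhibit an index $t$ for which $W_t$ is large enough that $\tfrac{D^2 G}{\sqrt{W_t}}$ reaches the target $\tfrac{GD}{\sqrt T}\log_+\tfrac{D}{r_\epsilon}$ — equivalently, to certify that the iterates have wandered far enough from $x_0$ for a large enough fraction of the run. Here I would follow the DoG analysis of \citet{ivgi23_dog_is_sgds_best_frien}. Because $\bar{r}_t$ is nondecreasing and confined to $[r_\epsilon, D]$, I would partition $\{0,\dots,T-1\}$ into $O(\log_+(D/r_\epsilon))$ dyadic blocks on which $\bar{r}_t$ varies by at most a factor of two; pigeonholing, one block at scale $[\rho,2\rho]$ carries at least a constant multiple of $T/\log_+(D/r_\epsilon)$ indices, and taking $t$ at the end of that block makes $W_t$ at least a constant multiple of $\rho^2 T/\log_+(D/r_\epsilon)$, hence of $\bar{r}_t^2 T/\log_+(D/r_\epsilon)$.

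I expect this denominator control — and the matching of the realized scale $\bar{r}_t$ against the $D^2$ sitting in the numerator — to be the genuine obstacle. The dyadic/pigeonhole step alone only produces a $\bar{r}_t^2$-type lower bound, so one must still argue that the block driving the suboptimality occurs at a scale comparable to the distance to the optimum; this is where the diameter bound and the condition $r_\epsilon < D$ are used, together with a continuation/contradiction argument showing that persistent large suboptimality forces the DoWG stepsizes to keep moving the iterates until $\bar{r}_t$ grows to the relevant scale — precisely the mechanism by which $\bar{r}_t$ tracks the true distance in the DoG proof. Once this matching is in place, assembling the dyadic bound with $\norm{\nabla f(x_t)}\le G$ and $\bar{r}_t\le D$ delivers the claimed $\mathcal{O}\!\left(GD\log_+(D/r_\epsilon)/\sqrt{T}\right)$ rate at the best block-endpoint $t$, while every other step above is routine adaptive-regret telescoping.
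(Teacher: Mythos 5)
There is a genuine gap, and it is exactly the one you flag at the end. Your opening regret inequality, your Abel summation for the distance term, your use of $\bar r_k^2\norm{\nabla f(x_k)}^2 = v_k - v_{k-1}$ with the telescoping inequality (the paper's Lemma~\ref{lem:seq}), and your dyadic/pigeonhole lower bound on the weight sum (the paper's Lemma~\ref{lem:log-T}) all match the paper's route. Where you diverge is in bounding the two sums by $D^2\sqrt{v_{t-1}}$: you replace $d_k^2$ and $\bar r_k^2$ by $D^2$ too early, which destroys the $\bar r_t$-homogeneity the argument needs. You end up with $f(\bar x_t)-f_* \le \tfrac32 D^2 G/\sqrt{W_t}$ with $W_t=\sum_{k<t}\bar r_k^2$, and to reach the target you would need $W_t \gtrsim D^2 T/\log_+^2(D/r_\epsilon)$, i.e.\ that $\bar r_t$ grows to scale $D$. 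No such guarantee exists and none is proved in DoG or DoWG: if $x_0$ is near $x_*$ inside a large domain $\mathcal{X}$, the iterates may never wander to scale $D$, $W_t$ stays far below $D^2 T$, and your chain of inequalities gives a bound much worse than $GD\log_+(D/r_\epsilon)/\sqrt T$ even though the theorem still holds. The "continuation/contradiction" rescue you sketch is not the mechanism in the DoG analysis and cannot be made to work.

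The paper's fix (Lemma~\ref{lem:inner-prod-bound}) is to \emph{not} discard the $\bar r_t$ factors. After Abel summation the distance term equals $\sqrt{v_{t-1}}\bigl[\bar d_t^2 - d_t^2\bigr]$ (plus telescoped pieces), and the key observation is that $\bar d_t^2 - d_t^2 = (d_s-d_t)(d_s+d_t)$ for the maximizing index $s$, with $d_s - d_t \le \norm{x_s-x_t} \le \norm{x_s-x_0}+\norm{x_t-x_0} \le 2\bar r_t$; hence this term is at most $4\bar r_t \bar d_t\sqrt{v_{t-1}}$, not $D^2\sqrt{v_{t-1}}$. Similarly the gradient term is bounded by $\bar r_t^2\sqrt{v_{t-1}}$ (keeping $\bar r_k^2\le\bar r_t^2$ rather than $\le D^2$). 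The resulting bound $\sum_{k<t}\bar r_k^2[f(x_k)-f_*] \le 2\bar r_t(\bar d_t+\bar r_t)\sqrt{v_{t-1}} \le 2\bar r_t^2(\bar d_t+\bar r_t)G\sqrt T$ carries an explicit $\bar r_t^2$ that cancels against the normalization: dividing by $W_t$ leaves $2(\bar d_t+\bar r_t)G\sqrt T\big/\sum_{k<t}(\bar r_k^2/\bar r_t^2)$, the denominator is lower-bounded by Lemma~\ref{lem:log-T} as $\tfrac1e(T/\log_+(D^2/r_\epsilon^2)-1)$ for some $t$, and $\bar d_t+\bar r_t\le 2D$ finishes. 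No scale-matching between $\bar r_t$ and $D$ is ever needed. (You would also need the paper's separate trivial case $T < 2\log_+(D^2/r_\epsilon^2)$, handled via $f(\bar x_t)-f_*\le GD$.)
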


\textbf{Discussion of convergence rate.} DoWG matches the optimal $\mathcal{O} \left( \frac{DG}{\sqrt{T}} \right)$ rate of tuned GD and tuned NGD up to an extra logarithmic factor. We note that the recently proposed algorithms DoG~\citep{ivgi23_dog_is_sgds_best_frien} and D-Adaptation~\citep{defazio23_learn_rate_free_learn_by_d_adapt} achieve a similar rate in this setting. 

\textbf{Comparison with DoG.} As we discussed before, DoWG uses an adaptively weighted sum of gradients for normalization compared to the simple sum used by DoG. In addition, DoG uses the stepsize $\frac{\bar{r}_t}{\sqrt{\sum_{k=0}^t \sqn{\nabla f(x_k)}}}$, whereas the DoWG stepsize is pointwise larger: since $\bar{r}_k^2$ is monotonically increasing in $k$ we have
\begin{align*}
  \eta_{\mathrm{DoWG}, t} = \frac{\bar{r}_t^2}{\sqrt{\sum_{k=0}^t \bar{r}_k^2 \sqn{\nabla f(x_k)}}} \geq \frac{\bar{r}_t^2}{\bar{r}_t \sqrt{\sum_{k=0}^t \sqn{\nabla f(x_k)}}} = \frac{\bar{r}_t}{\sqrt{\sum_{k=0}^t \sqn{\nabla f(x_k)}}} = \eta_{\mathrm{DoG}, t}.
\end{align*}
Of course, the pointwise comparison may not reflect the practical performance of the algorithms, since after the first iteration the sequence of iterates $x_2, x_3, \ldots$ generated by the two algorithms can be very different. We observe in practice that DoWG is in general more aggressive, and uses larger stepsizes than both DoG and D-Adaptation (see Section~\ref{sec:experimental-results}).

\textbf{Smooth analysis.} Our next theorem shows that DoWG adapts to the smoothness constant and the diameter $D$ of the set $\mathcal{X}$.

\begin{theorem}\label{thm:dowg-smooth}
(DoWG, Smooth $f$). Suppose that the function $f$ is $L$-smooth, convex, and has a minimizer $x_{*} \in \mathcal{X}$. Suppose that the domain $\mathcal{X}$ is a closed convex set of diameter $D > 0$. Let $r_{\epsilon} < D$. Then the output of Algorithm~\ref{alg:dowg} satisfies for some $t \in \left\{ 0, 1, \ldots, T-1 \right\}$
\begin{align*}
  f(\bar{x}_t) - f_{*} = \mathcal{O} \left[ \frac{L D^2}{T} \log_+ \frac{D}{r_\epsilon}  \right],
\end{align*}
where $\bar{x}_t \eqdef \frac{1}{\sum_{k=0}^{t-1} \overline{r}_k^2}  \sum_{k=0}^{t-1} \overline{r}_k^2 x_k$ is a weighted average of the iterates returned by the algorithm.
\end{theorem}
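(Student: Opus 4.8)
The plan is to follow the standard template for adaptive gradient methods, reusing the skeleton of the nonsmooth analysis (Theorem~\ref{thm:dowg-nonsmooth}) and injecting the smoothness self-bounding property exactly where the Lipschitz bound $\norm{\nabla f(x)} \le G$ is used in the nonsmooth case. First I would start from nonexpansivity of $\Pro$ and expand the square, just as in the NGD proof, to get the per-step inequality $2\eta_t \ev{\nabla f(x_t), x_t - x_*} \le \sqn{x_t - x_*} - \sqn{x_{t+1} - x_*} + \eta_t^2 \sqn{\nabla f(x_t)}$. Using convexity ($\ev{\nabla f(x_t), x_t - x_*} \ge f(x_t) - f_*$), substituting the DoWG stepsize $\eta_t = \bar{r}_t^2 / \sqrt{v_t}$, and recognizing that $\bar{r}_t^2 \sqn{\nabla f(x_t)} = v_t - v_{t-1}$, this becomes $\frac{2\bar{r}_t^2}{\sqrt{v_t}}(f(x_t) - f_*) \le \sqn{x_t - x_*} - \sqn{x_{t+1} - x_*} + \frac{\bar{r}_t^2 (v_t - v_{t-1})}{v_t}$, whose two right-hand terms are a weighted distance decrement and a gradient-error increment.

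Second, I would multiply by $\sqrt{v_t}$ and sum over $t = 0, \dots, T-1$. For the distance term I would use an Abel (summation-by-parts) argument together with the monotonicity of $\sqrt{v_t}$ and the diameter bound $\sqn{x_t - x_*} \le D^2$, which gives $\sum_t \sqrt{v_t}\,(\sqn{x_t-x_*} - \sqn{x_{t+1}-x_*}) \le D^2 \sqrt{v_{T-1}}$. For the gradient term I would use $\bar{r}_t^2 \le D^2$ together with the standard AdaGrad inequality $\frac{v_t - v_{t-1}}{\sqrt{v_t}} \le 2(\sqrt{v_t} - \sqrt{v_{t-1}})$, which telescopes (using $v_{-1}=0$) to $2 D^2 \sqrt{v_{T-1}}$. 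Combining, I obtain the weighted-regret bound $S_T \eqdef \sum_{t=0}^{T-1} \bar{r}_t^2 (f(x_t) - f_*) \le \tfrac{3}{2} D^2 \sqrt{v_{T-1}}$. This portion is essentially shared with the nonsmooth proof.

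Third comes the one genuinely new step, which mirrors the square-root trick of Theorem~\ref{thm:ngd-stronger}: instead of bounding $v_{T-1} \le G^2 \sum_t \bar{r}_t^2$ as in the Lipschitz case, I would invoke smoothness in the self-bounding form $\sqn{\nabla f(x_t)} \le 2L(f(x_t) - f_*)$ (Fact~\ref{fact:smoothness-consequence}), so that $v_{T-1} = \sum_t \bar{r}_t^2 \sqn{\nabla f(x_t)} \le 2L\, S_T$. Plugging this into the regret bound yields the self-bounding inequality $S_T \le \tfrac{3}{2} D^2 \sqrt{2 L S_T}$, which solves to $S_T \le \tfrac{9}{2} L D^4$ --- a bound that is crucially independent of $T$ and of the gradient magnitudes. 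Dividing by $\sum_{k} \bar{r}_k^2$ and applying Jensen's inequality with weights $\bar{r}_k^2$ then converts this into a bound on $f(\bar{x}_T) - f_*$.

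The hard part, and the source of both the $1/T$ factor and the $\log_+(D/r_\epsilon)$ factor, is lower-bounding the denominator $\sum_{k=0}^{t-1} \bar{r}_k^2$ for a well-chosen index $t$ (this is the role of ``for some $t$'' in the statement). If the distance estimate $\bar{r}_t$ reached a constant fraction of its terminal value within the first half of the run, then $\sum_k \bar{r}_k^2 = \Omega(T \bar{r}_{T-1}^2)$ and the ratio $S_T / \sum_k \bar{r}_k^2$ would already be $\mathcal{O}(L D^2 / T)$ with no logarithm. The logarithm is needed precisely to handle the general case in which $\bar{r}_t$ grows gradually: following the DoG-style argument of \citet{ivgi23_dog_is_sgds_best_frien}, I would partition the horizon into the $\mathcal{O}(\log_+(D/r_\epsilon))$ dyadic scales that $\bar{r}_t$ can occupy between $r_\epsilon$ and $D$, and then extract by pigeonhole a prefix (or epoch) on which $\bar{r}_t$ is essentially constant and long enough that its weighted average attains the stated rate. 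Making this selection rigorous --- reconciling the whole-prefix self-bounding bound on $S_T$ with the scale on which the denominator is large --- is the main obstacle and is where almost all of the technical effort will go.
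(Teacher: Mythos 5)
Your overall architecture is the same as the paper's: a weighted ``regret'' bound on $S_t = \sum_{k<t}\bar{r}_k^2(f(x_k)-f_*)$ obtained from nonexpansivity plus telescoping, the smoothness self-bounding step $v_{t-1}\le 2L S_t$ closing the loop exactly as in Theorem~\ref{thm:ngd-stronger}, Jensen with weights $\bar{r}_k^2$, and Lemma~\ref{lem:log-T} to pick the index $t$. However, there is a genuine quantitative gap in your second step that breaks the final rate. You bound the Abel-summed distance term by $D^2\sqrt{v_{T-1}}$ and the gradient term by $2D^2\sqrt{v_{T-1}}$, arriving at $S_T\le \tfrac32 D^2\sqrt{v_{T-1}}$ and hence $S_T\le \tfrac92 L D^4$. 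This bound is correct but too weak: it has no factor of $\bar{r}_t^2$ in it. Lemma~\ref{lem:log-T} only guarantees, for some $t$, that $\sum_{k<t}\bar{r}_k^2 \gtrsim \bar{r}_t^2\, T/\log_+(D/r_\epsilon)$, so dividing your numerator by this denominator leaves an uncontrolled factor $D^2/\bar{r}_t^2$, which can be as large as $D^2/r_\epsilon^2$ --- polynomially worse than the claimed rate. Your remark that if $\bar{r}_t$ saturates early then $\sum_k\bar{r}_k^2=\Omega(T\bar{r}_{T-1}^2)$ already gives $\mathcal{O}(LD^2/T)$ implicitly assumes $\bar{r}_{T-1}=\Theta(D)$, which is not guaranteed (the iterates may never travel far from $x_0$).

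The missing idea is that the regret bound must itself carry a factor of $\bar{r}_t^2$ so that it cancels against the $\bar{r}_t^2$ in the denominator. The paper's Lemma~\ref{lem:inner-prod-bound} achieves this: after the same Abel summation, the distance term collapses to $\sqrt{v_{t-1}}\bigl(\bar{d}_t^2-d_t^2\bigr)$ where $\bar{d}_t=\max_{k\le t} d_k$, and the crucial observation is that $\bar{d}_t^2-d_t^2=(d_s-d_t)(d_s+d_t)$ with $d_s-d_t\le\norm{x_s-x_t}\le r_s+r_t\le 2\bar{r}_t$, giving $4\bar{r}_t\bar{d}_t\sqrt{v_{t-1}}$ rather than your $D^2\sqrt{v_{t-1}}$; the gradient term is similarly kept as $\bar{r}_t^2\sqrt{v_{t-1}}$. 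This yields $S_t\le 2\bar{r}_t(\bar{d}_t+\bar{r}_t)\sqrt{v_{t-1}}$ and, after self-bounding, $S_t\le 8L\bar{r}_t^2(\bar{d}_t+\bar{r}_t)^2$; the $\bar{r}_t^2$ cancels in $S_t/\sum_{k<t}\bar{r}_k^2$ and only then does $(\bar{d}_t+\bar{r}_t)^2\le 4D^2$ get invoked. Relatedly, you describe the index-selection step as the main technical obstacle requiring a dyadic/pigeonhole construction; in the paper it is just the short, off-the-shelf Lemma~\ref{lem:log-T} --- the real work is precisely the $\bar{r}_t$-dependent refinement of the regret bound that your proposal skips. (A minor additional point: the paper also handles the edge case $T<2\log_+(D^2/r_\epsilon^2)$ separately via $f(\bar{x}_t)-f_*\le \tfrac{L}{2}\sqn{\bar{x}_t-x_*}\le \tfrac{L D^2}{2}$, since Lemma~\ref{lem:log-T} gives nothing useful there.)
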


The proof of this theorem and all subsequent results is relegated to the supplementary material. We note that the proof of Theorem~\ref{thm:dowg-smooth} uses the same trick used to show the adaptivity of NGD to smoothness: we use the fact that $\norm{\nabla f(x)} \leq \sqrt{2L (f(x) - f_{*})}$ for all $x \in \mathcal{X}$ applied to a carefully-chosen weighted sum of gradients.

\textbf{Comparison with GD/NGD.} Both well-tuned gradient descent and normalized gradient descent achieve the convergence rate $\mathcal{O} \left( \frac{L D_0^2}{T}  \right)$ where $D_0 = \norm{x_0 - x_{*}} \leq D$ for the constrained convex minimization problem. Theorem~\ref{thm:dowg-smooth} shows that DoWG essentially attains the same rate up to the difference between $D_0$ and $D$ and an extra logarithmic factor. In the worst case, if we initialize far from the optimum, we have $D_0 \simeq D$ and hence the difference is not significant. We note that DoG~\citep{ivgi23_dog_is_sgds_best_frien} suffers from a similar dependence on the diameter $D$ of $\mathcal{X}$, and can diverge in the unconstrained setting, where $\mathcal{X}$ is not compact. This can be alleviated by making the stepsize smaller by a polylogarithmic factor. A similar reduction of the stepsize also works for DoWG, and we provide the proof in Section~\ref{sec:unconstr-doma-extens} in the supplementary.

\textbf{Comparison with DoG.} After the initial version of this paper, \citet{ivgi23_dog_is_sgds_best_frien} reported a convergence guarantee for the \emph{unweighted} average $\hat{x}_T = \frac{1}{T} \sum_{k=0}^{T-1} x_k$ returned by DoG. In particular, Proposition 3 in their work gives the rate
\begin{align*}
f(\hat{x}_T) - f_{\ast} = \mathcal{O} \left( \frac{L (D_0 \log_+ \frac{\bar{r}_T}{r_{\epsilon}} + \bar{r}_T)^2}{T} \right) =  \mathcal{O} \left( \frac{L D^2}{T} \log_+^2 \frac{D}{\bar{r}_{\epsilon}}  \right).
\end{align*}
where $D_0 = \norm{x_0 - x_{\ast}}$, and where in the second step we used the bound $D_0 \leq D$ and $\bar{r}_T \leq D$. This rate is the same as that achieved by the weighted average of the DoWG iterates up to an extra logarithmic factor $\log_+ \frac{D}{\bar{r}_{\epsilon}}$. We note that DoG also has a guarantee in the stochastic setting, provided the gradients are bounded locally with a known constant, while in this work we have focused exclusively on the deterministic setting.

\begin{figure}[htbp]
    \centering
    \includegraphics[scale=0.18]{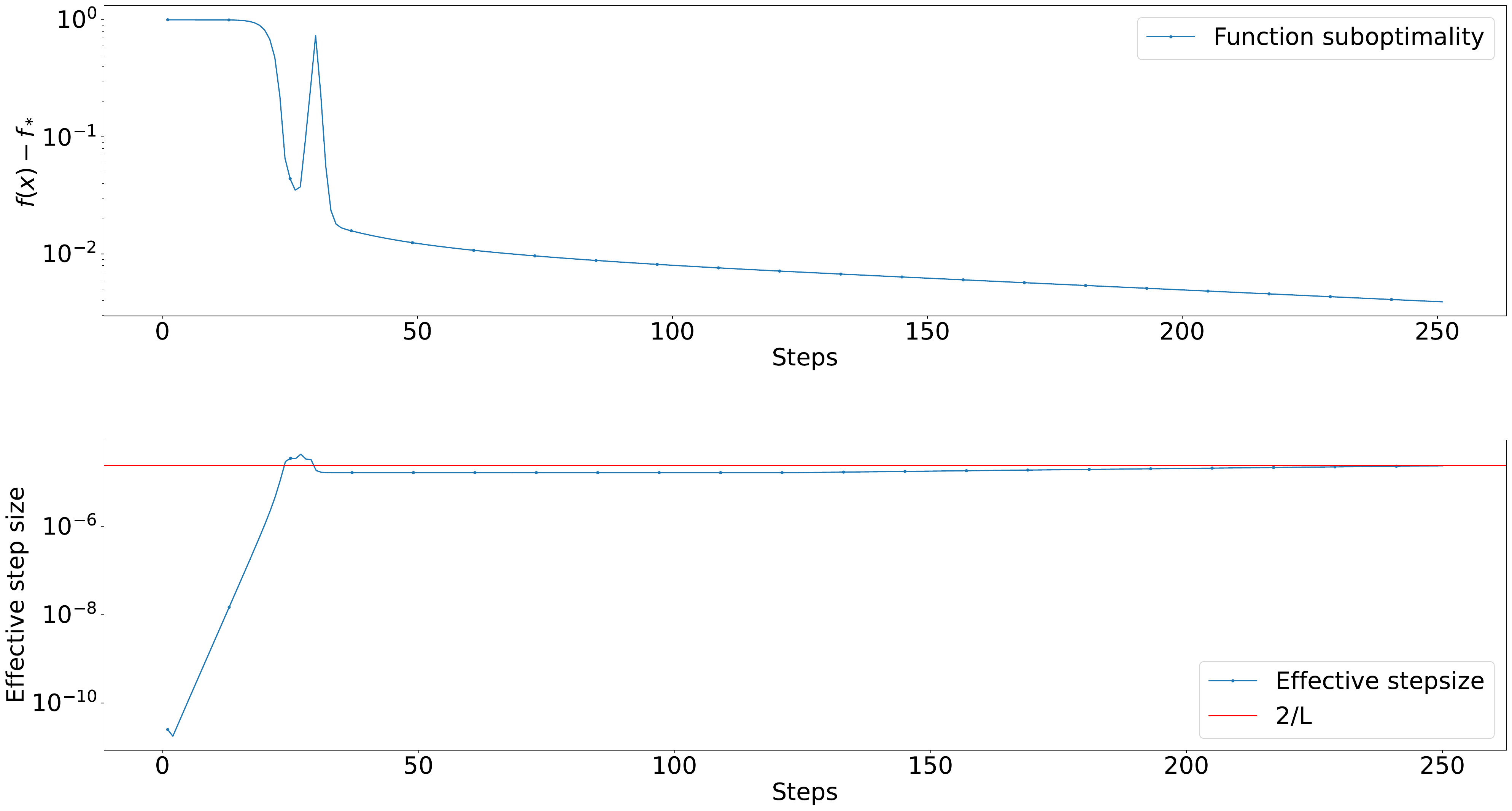}
    \caption{DoWG iterations on $\ell_2$-regularized linear regression on the \textrm{mushrooms} dataset from LibSVM~\citep{chang11_libsvm} with $r_{\epsilon} = 10^{-6}$. Top (a) shows the function suboptimality over time. Observe that as the number of iterations grow, the method becomes non-monotonic. Bottom (b) shows the DoWG stepsize over time.}
    \label{fig:mushrooms-dowg}
\end{figure}

\textbf{Edge of Stability.} Like NGD, DoWG also tends to increase the stepsize and train at the edge of stability. The intuition from NGD carries over: in order to preserve the convergence rate of GD, DoWG tends to drive the stepsize larger. However, once it overshoots, the gradients quickly diverge, forcing the stepsize back down. Figure~\ref{fig:mushrooms-dowg} shows the performance of DoWG and its stepsize on the same regularized linear regression problem as in Figure~\ref{fig:mushrooms-ngd}. Comparing the two figures, we observe that DoWG is also non-monotonic and trains close to the edge of stability, but its stepsize oscillates less than NGD's effective stepsize.

\textbf{Universality.} Theorems~\ref{thm:dowg-smooth} and~\ref{thm:dowg-nonsmooth} together show that DoWG is universal, i.e.\ it almost recovers the convergence of gradient descent with tuned stepsizes in both the smooth and nonsmooth settings. As the optimal stepsize for gradient descent can differ significantly between the two settings, we believe that achieving both rates simultaneously without any parameter-tuning or search procedures is a significant strength of DoWG.

\textbf{Comparison with other universal methods.} \citet{nesterov14_universal_grad_methods} developed a universal method for convex optimization based on a modified line search that is almost parameter-free, requiring only a small initial estimate of the smoothness or Lipschitz constants. \citet{grimmer22_optim_univer_first_order_method} later extended the analysis of Nesterov's method to minimizing finite-sums. \citet{levy18_onlin_adapt_method_univer_accel} showed that properly-tuned AdaGrad-Norm and an accelerated variant of it are universal. However, neither AdaGrad-Norm nor its accelerated variant is parameter-free. \citet{hazan19_revis_polyak_step_size} show that the Polyak stepsize is universal, provided $f_{*}$ is known. \citet{kavis19_unixg} also develop a universal method, UniXGrad, but it requires estimating the distance to the optimum $D_0 = \norm{x_0 - x_{*}}$. \citet{carmon22_makin_sgd_param_free} also develop a universal method that is almost-optimal using bisection search. However, their method requires an initial stepsize $\eta_0$ to satisfy $\eta_0 \leq \frac{1}{2L}$, though we can get around this requirement by choosing $\eta_0$ to be very small, only paying a $\log \log \frac{1}{\eta_0}$ penalty. In contrast, the only initialization DoWG requires is choosing $r_{\epsilon} \leq D$. This can be done by choosing $r_{\epsilon} \leq \norm{x-y}$ for any two $x \neq y \in \mathcal{X}$.

\section{Experimental results}
\label{sec:experimental-results}

\begin{figure}[h]
    \centering
    \includegraphics[scale=0.3]{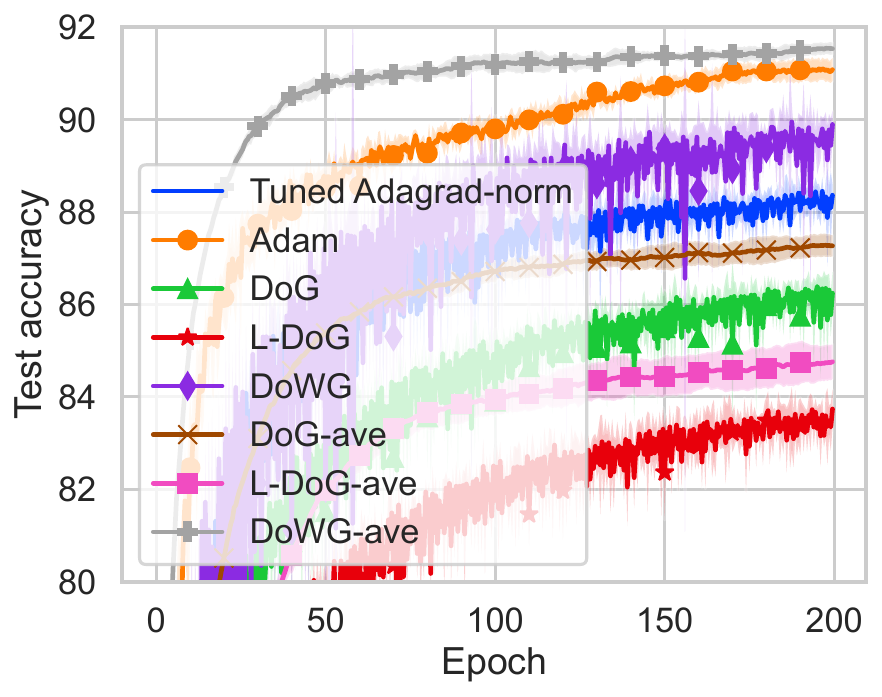}
    \includegraphics[scale=0.3]{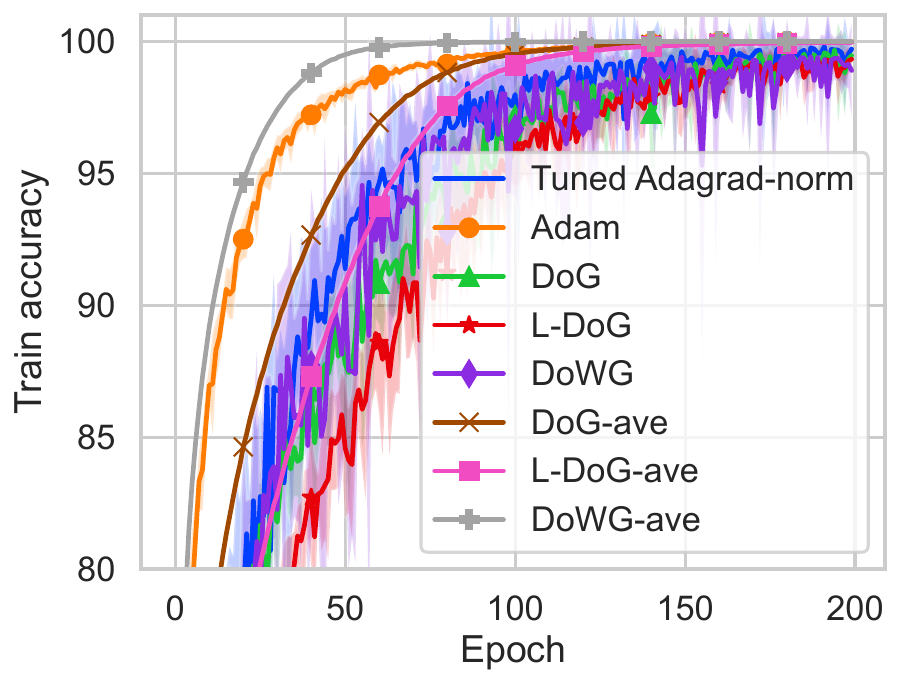}
    \includegraphics[scale=0.3]{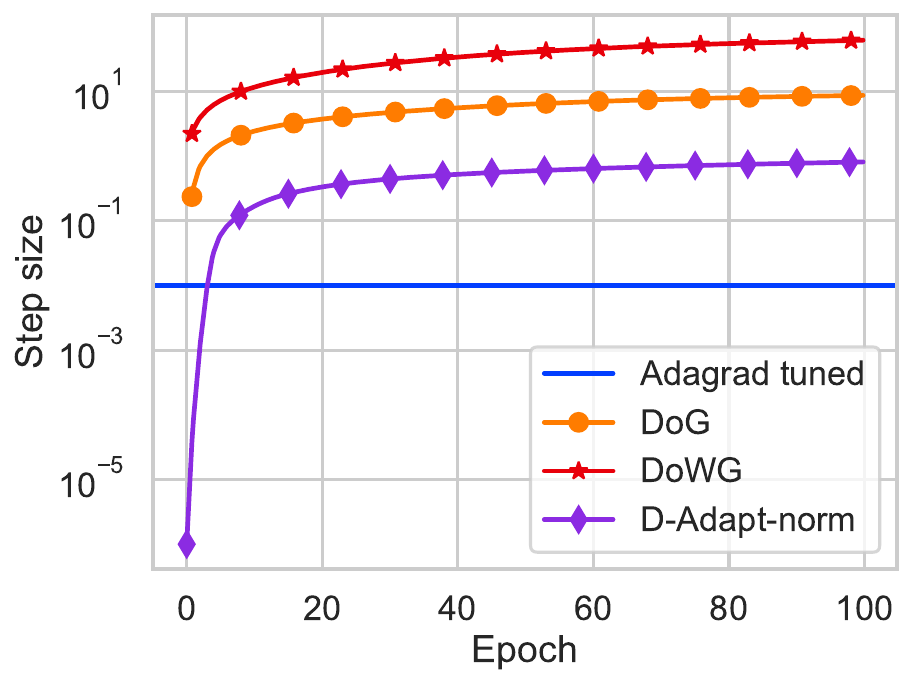}\\
    \includegraphics[scale=0.3]{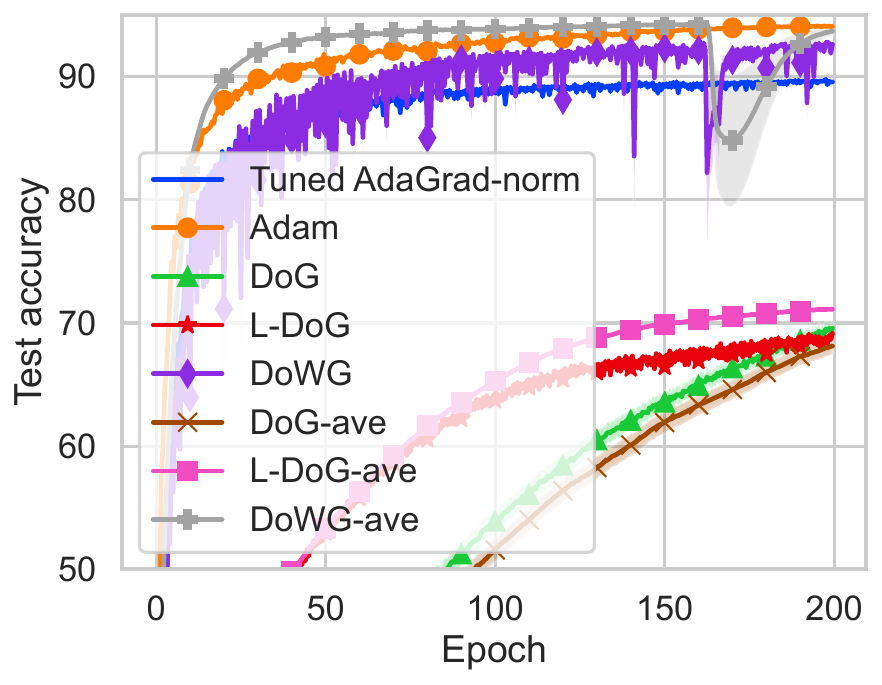}
    \includegraphics[scale=0.3]{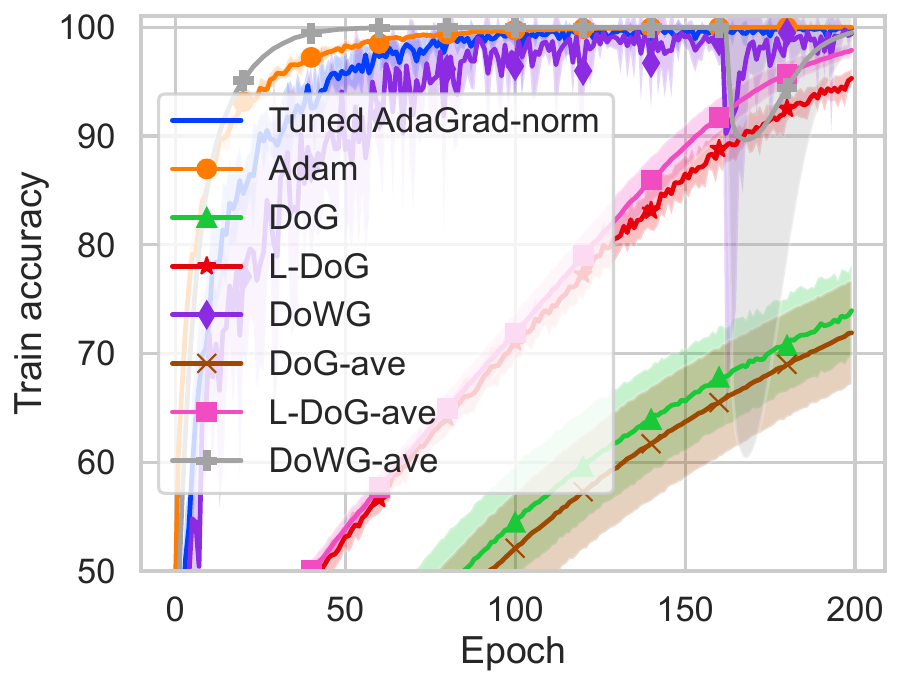}
    \includegraphics[scale=0.3]{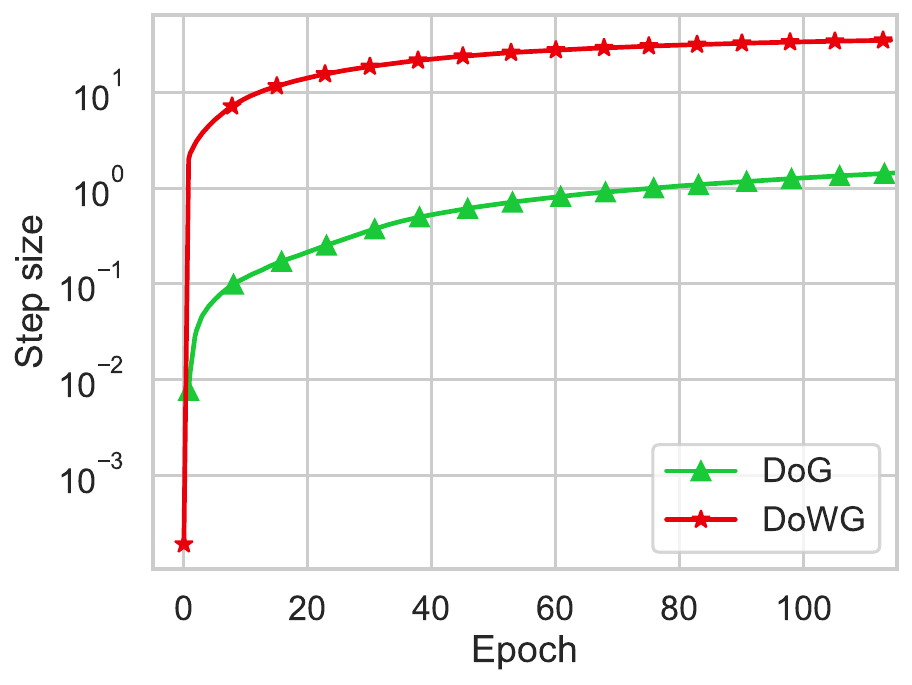}
    \caption{VGG11 (top) and ResNet-50 (bottom) training on CIFAR10. Left: test accuracy, middle: train loss, right: step sizes.}
    \label{fig:cifar10}
\end{figure}


We compare DoWG to DoG, L-DoG from \citet{ivgi23_dog_is_sgds_best_frien}, for all of which we also report performance of the polynomially-averaged iterate with power 8 as recommended by \citet{ivgi23_dog_is_sgds_best_frien}. We also add comparison against Adam~\citep{kingma14_adam} with cosine annealing and the standard step size $10^{-3}$. All methods are used with batch size 256 with no weight decay on a single RTX3090 GPU. We plot the results in Figure~\ref{fig:cifar10} with the results averaged over 8 random seeds. We train the VGG11~\citep{vgg16} and ResNet-50~\citep{resnet50} neural network architectures on CIFAR10~\citep{krizhevsky2009learning} using PyTorch~\citep{pytorch19}, and implement\footnote{\url{https://github.com/rka97/dowg}} DoWG on top of the DoG code\footnote{\url{https://github.com/formll/dog}}. Unsurprisingly, DoWG's estimates of the step size are larger than that of DoG and D-Adapt-norm, which also makes it less stable on ResNet-50. While the last iterate of DoWG gives worse test accuracy than Adam, the average iterate of DoWG often performs better. 

Finally, we note that while both neural networks tested are generally nonsmooth, recent work shows \emph{local} smoothness can significantly influence and be influenced by a method's trajectory~\citep{cohen22_adapt_gradien_method_at_edge_stabil,pan22_adam_sgd}. We believe this adaptivity to smoothness might explain the empirical difference between DoWG and DoG, but leave a rigorous discussion of adaptivity to local smoothness to future work.

\bibliographystyle{plainnat}
\bibliography{dowg-bib}

\clearpage
\part*{Supplementary material}
{\tableofcontents}

\section{Algorithm-independent results}

In this section we collect different results that are algorithm-independent, the first is a consequence of smoothness:

\begin{fact}\label{fact:smoothness-consequence}
Suppose that $f$ is smooth and lower bounded by $f_{*}$. Then for all $x \in \mathbb{R}^d$ we have,
\begin{align*}
\sqn{\nabla f(x)} \leq 2L \left( f(x) - f_{*} \right).
\end{align*}
\end{fact}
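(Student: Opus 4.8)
The plan is to invoke the standard descent lemma that follows from $L$-smoothness and then optimize over its free argument. Recall that $L$-smoothness (i.e.\ $L$-Lipschitz continuity of $\nabla f$) yields, for all $x, y \in \mathbb{R}^d$, the quadratic upper bound
\[
f(y) \leq f(x) + \ev{\nabla f(x), y - x} + \frac{L}{2} \sqn{y - x}.
\]
This is the only consequence of smoothness I would need, and I would emphasize that convexity is \emph{not} required for this fact — only smoothness and lower-boundedness.

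The key step is to choose $y$ so as to minimize the right-hand side. The right-hand side is a convex quadratic in $y$ whose unique minimizer is the gradient step $y = x - \frac{1}{L} \nabla f(x)$. Substituting this choice collapses the linear and quadratic terms: the inner product contributes $-\frac{1}{L}\sqn{\nabla f(x)}$ and the quadratic term contributes $+\frac{1}{2L}\sqn{\nabla f(x)}$, so that
\[
f\left( x - \tfrac{1}{L}\nabla f(x) \right) \leq f(x) - \frac{1}{2L}\sqn{\nabla f(x)}.
\]

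Finally, I would apply the lower-bound hypothesis. Since $f$ is lower bounded by $f_{*}$ everywhere, we have $f_{*} \leq f\left( x - \tfrac{1}{L}\nabla f(x) \right)$, and chaining this with the previous display gives $f_{*} \leq f(x) - \frac{1}{2L}\sqn{\nabla f(x)}$. Rearranging yields the claimed bound $\sqn{\nabla f(x)} \leq 2L\left( f(x) - f_{*} \right)$.

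There is no real obstacle here — this is a classical consequence of smoothness. The only point worth stressing is the clever choice of $y$: the inequality becomes sharp precisely by plugging in the one-step gradient descent iterate with stepsize $1/L$, which is exactly what converts a statement about the function value after a step into a statement controlling the gradient magnitude by the suboptimality $f(x) - f_{*}$. I would also note that the argument uses only that $f_{*}$ lower-bounds $f$, so it applies even when $f_{*}$ is not attained.
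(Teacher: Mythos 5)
Your proposal is correct and follows exactly the same route as the paper's proof: plug the one-step gradient iterate $y = x - \frac{1}{L}\nabla f(x)$ into the smoothness upper bound and then invoke the lower bound $f_{*} \leq f(y)$. Your added remarks (that convexity is not needed and that $f_{*}$ need only be an infimum, not attained) are accurate and consistent with how the fact is stated and used in the paper.
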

\begin{proof}
This is a common result in the literature, and finds applications in convex and non-convex optimization see e.g.~\citep{levy17_onlin_to_offlin_conver_univer,levy18_onlin_adapt_method_univer_accel,orabona2020parameter,khaled20_better_theor_sgd_noncon_world}. We include the proof for completeness. Let $x \in \mathbb{R}^d$ and define $x_+ = x - \frac{1}{L} \nabla f(x)$. Then by smoothness
\begin{align*}
f(x_+) &\leq f(x) + \ev{\nabla f(x), x_+ - x} + \frac{L}{2} \sqn{x_+ - x} \\
&= f(x) - \frac{1}{L} \sqn{\nabla f(x)} + \frac{1}{2L} \sqn{\nabla f(x)} \\
&= f(x) - \frac{1}{2L}  \sqn{\nabla f(x)}.
\end{align*}
Because $f$ is lower bounded by $f_{*}$ we thus have
\begin{align*}
f_{*} \leq f(x_+) \leq f(x) - \frac{1}{2L}  \sqn{\nabla f(x)}.
\end{align*}
Rearranging gives $\sqn{\nabla f(x)} \leq 2L \left( f(x) - f_{*} \right)$.
\end{proof}

The next two results are helpful algebraic identities that will be useful for the proof of DoWG.

\begin{lemma}\label{lem:seq}
\citep[Lemma 4]{ivgi23_dog_is_sgds_best_frien}. Let $a_0, .., a_t$ be a nondecreasing sequence of nonnegative numbers. Then
\begin{align*}
\sum_{k=1}^t \frac{a_k - a_{k-1}}{\sqrt{a_k}} \leq 2 \left( \sqrt{a_t} - \sqrt{a_0} \right).
\end{align*}
\end{lemma}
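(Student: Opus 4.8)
The plan is to reduce the claim to a termwise inequality that telescopes. Specifically, I will show that for every index $k$ with $a_k > 0$,
\[
\frac{a_k - a_{k-1}}{\sqrt{a_k}} \leq 2\left( \sqrt{a_k} - \sqrt{a_{k-1}} \right),
\]
and then sum this bound over $k = 1, \ldots, t$, so that the right-hand side collapses by telescoping to $2(\sqrt{a_t} - \sqrt{a_0})$, which is exactly the claimed bound.

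To prove the termwise inequality, I would factor the numerator as a difference of squares, writing $a_k - a_{k-1} = (\sqrt{a_k} - \sqrt{a_{k-1}})(\sqrt{a_k} + \sqrt{a_{k-1}})$. Dividing through by $\sqrt{a_k}$ and invoking the monotonicity hypothesis $a_{k-1} \leq a_k$ (hence $\sqrt{a_{k-1}} \leq \sqrt{a_k}$), the leftover factor satisfies $(\sqrt{a_k} + \sqrt{a_{k-1}})/\sqrt{a_k} \leq 2$, which delivers the termwise bound at once. An equivalent and perhaps cleaner route is an integral comparison: since the map $s \mapsto 1/(2\sqrt{s})$ is decreasing, one has $\sqrt{a_k} - \sqrt{a_{k-1}} = \int_{a_{k-1}}^{a_k} \frac{ds}{2\sqrt{s}} \geq \frac{a_k - a_{k-1}}{2\sqrt{a_k}}$, since the integrand is everywhere at least $1/(2\sqrt{a_k})$ on the interval of integration. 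Either argument gives the same conclusion.

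The only subtlety, and it is a minor one, is the degenerate case $a_k = 0$. Because the sequence is nonnegative and nondecreasing, $a_k = 0$ forces $a_j = 0$ for all $j \leq k$, so the corresponding summand $(a_k - a_{k-1})/\sqrt{a_k}$ is of the form $0/0$ and is naturally read as $0$; these indices contribute nothing on either side, so I would simply restrict the summation to indices with $a_k > 0$. I do not expect any genuine obstacle in this proof: the entire content is the one-line termwise estimate, and once it is established the result is immediate by telescoping.
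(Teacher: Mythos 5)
Your argument is correct and is essentially identical to the paper's own proof: both factor $a_k - a_{k-1}$ as a difference of squares, bound $(\sqrt{a_k}+\sqrt{a_{k-1}})/\sqrt{a_k}$ by $2$ using monotonicity, and telescope. Your handling of the degenerate $a_k = 0$ case and the alternative integral comparison are fine but add nothing essential beyond the paper's one-line computation.
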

\begin{proof}
This is \citep[Lemma 4]{ivgi23_dog_is_sgds_best_frien}. We include the proof for completeness:
\begin{align*}
  \sum_{k=1}^t \frac{a_k - a_{k-1}}{\sqrt{a_k}} &= \sum_{k=1}^t \frac{\left( \sqrt{a_k} - \sqrt{a_{k-1}} \right) (\sqrt{a_k} + \sqrt{a_{k-1}})}{\sqrt{a_k}}  \\
                                             &\leq 2 \sum_{k=1}^t \left( \sqrt{a_k} - \sqrt{a_{k-1}} \right) \\
&= 2 \left( \sqrt{a_t} - \sqrt{a_0} \right).
\end{align*}
\end{proof}

\begin{lemma}\label{lem:log-T}
(\citep[Lemma 3]{ivgi23_dog_is_sgds_best_frien}, similar to \citep[Lemma 11]{defazio23_learn_rate_free_learn_by_d_adapt}). Let $s_0, s_1, \ldots, s_T$ be a positive increasing sequence. Then
\begin{align*}
  \max_{t \leq T} \sum_{i<t} \frac{s_i}{s_t}  \geq \frac{1}{e} \left( \frac{T}{\log_+ (s_T / s_0)} - 1 \right),
\end{align*}
where $\log_+ x \eqdef \log x + 1$.
\end{lemma}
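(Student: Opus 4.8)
The plan is to recognize the summand and then run a geometric pigeonhole argument. Writing $P_t \eqdef \sum_{i<t} s_i$, the quantity to be bounded is $\max_{t\le T} P_t/s_t$. Intuitively this ratio is large exactly when many of the earlier terms $s_i$ are comparable to $s_t$; since the sequence is increasing, this happens whenever the sequence lingers inside a narrow multiplicative window. I would therefore partition the indices according to which multiplicative scale $s_t$ occupies, and use pigeonhole to locate one scale that captures many indices. No contradiction hypothesis is needed.

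Concretely, for $j = 0, 1, 2, \ldots$ I would define the bucket $B_j \eqdef \{\, t \in \{0,\ldots,T\} : e^{j} \le s_t/s_0 < e^{j+1} \,\}$. Because the sequence is positive and increasing, $1 = s_0/s_0 \le s_t/s_0 \le s_T/s_0$ for every $t$, so each index lands in some $B_j$ with $0 \le j \le \lfloor \log(s_T/s_0)\rfloor$. Hence the $T+1$ indices are distributed among at most $\lfloor \log(s_T/s_0)\rfloor + 1 \le \log_+(s_T/s_0)$ nonempty buckets, and by pigeonhole some bucket $B_{j^\star}$ contains at least $m \ge (T+1)/\log_+(s_T/s_0)$ indices.

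I would then list the indices of $B_{j^\star}$ as $t_1 < t_2 < \cdots < t_m$ and evaluate the target sum at $t = t_m$. Keeping only the terms $i = t_1, \ldots, t_{m-1}$ (all strictly less than $t_m$, and all summands nonnegative) gives
\begin{align*}
\max_{t\le T} \sum_{i<t} \frac{s_i}{s_t} \ \ge\ \sum_{i < t_m} \frac{s_i}{s_{t_m}} \ \ge\ \sum_{l=1}^{m-1} \frac{s_{t_l}}{s_{t_m}}.
\end{align*}
Every bucket index satisfies $s_{t_l} \ge e^{j^\star} s_0$ and $s_{t_m} < e^{j^\star+1} s_0$, so each ratio exceeds $e^{j^\star} s_0/(e^{j^\star+1} s_0) = 1/e$. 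Therefore the sum is at least $(m-1)/e$, and inserting the pigeonhole bound on $m$ yields
\begin{align*}
\max_{t\le T} \sum_{i<t} \frac{s_i}{s_t} \ \ge\ \frac{m-1}{e} \ \ge\ \frac{1}{e}\left( \frac{T+1}{\log_+(s_T/s_0)} - 1 \right) \ \ge\ \frac{1}{e}\left( \frac{T}{\log_+(s_T/s_0)} - 1 \right),
\end{align*}
which is the claim.

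The only points that need care are the bookkeeping that caps the number of buckets at $\log_+(s_T/s_0)$ — the $+1$ built into $\log_+$ is exactly what absorbs the $\lfloor \cdot \rfloor + 1$ — and the harmless edge case $T/\log_+(s_T/s_0) \le 1$, where the right-hand side is nonpositive and the bound holds trivially since every summand is nonnegative. I expect the bucket-counting step to be the main place an off-by-one could creep in; the within-bucket factor-of-$e$ estimate and the pigeonhole itself are routine. An alternative would be a telescoping argument assuming $P_t \le M s_t$ for all $t$ and deriving $P_{t+1} \ge (1+1/M)P_t$, but that route leaves an extra $\log M$ term that is awkward to eliminate, so I would favor the bucketing argument above.
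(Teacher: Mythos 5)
Your proof is correct, edge cases included, but it locates the crucial window by a genuinely different (dual) decomposition. The paper partitions the \emph{indices} $\{0,\dots,T\}$ into $K=\lceil\log(s_T/s_0)\rceil$ contiguous blocks of length $n=\lfloor T/K\rfloor$ and uses the telescoping bound $\log(s_T/s_0)\ge\sum_{k}\log(s_{n(k+1)}/s_{nk})\ge K\min_k\log(s_{n(k+1)}/s_{nk})$ to exhibit one block over which the sequence grows by at most a factor of $e$; you instead partition the \emph{values} into multiplicative windows $[e^j s_0,\,e^{j+1}s_0)$ and pigeonhole the $T+1$ indices into at most $\lfloor\log(s_T/s_0)\rfloor+1\le\log_+(s_T/s_0)$ such windows. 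The punchline is the same in both arguments --- inside a factor-$e$ window every ratio $s_i/s_t$ is at least $1/e$, and the window captures at least about $T/\log_+(s_T/s_0)$ indices --- but your version replaces the paper's floor/ceiling juggling with a single counting step, and even delivers the marginally stronger numerator $T+1$ before you relax it to $T$. Two small remarks: the word ``nonempty'' in your bucket count is unnecessary (pigeonhole only needs an upper bound on the total number of buckets, empty or not), and since the sequence is increasing your buckets are automatically contiguous index intervals, which is why the two decompositions end up giving the identical constant and identical final bound.
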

\begin{proof}
This is \citep[Lemma 3]{ivgi23_dog_is_sgds_best_frien}. We include the proof for completeness: Define $K = \ceil{\log \frac{s_T}{s_0}}$ and $n = \floor{\frac{T}{K}}$. Then,
\begin{align*}
  \log \left( \frac{s_T}{s_0}  \right) \geq \sum_{k=0}^{K-1} \log \left( \frac{s_{n(k+1)}}{s_{nk}}  \right) \geq K \min_{k < K} \log \frac{s_{n(k+1)}}{s_{nk}}.
\end{align*}
Rearranging and using $K = \ceil{\log \frac{s_T}{s_0}}$ gives
\begin{align*}
\min_{k < K} \log \frac{s_{n(k+1)}}{s_{nk}}  \leq \frac{\log \frac{s_T}{s_0} }{K} \leq 1.
\end{align*}
Therefore,
\begin{align*}
\min_{k < K} \frac{s_{n(k+1)}}{s_{nk}} \leq e.
\end{align*}
Thus,
\begin{align*}
  \max_{t \leq T} \sum_{i \leq t} \frac{s_i}{s_t} &\geq \max_{t \leq T} n \frac{s_{t-n}}{s_t} \\
&\geq \max_{k \leq K} n \frac{s_{n(k-1)}}{s_{nk}} \\
&\geq e^{-1} n \\
&= e^{-1} \floor{\frac{T}{K}} \\
&\geq e^{-1} \left( \frac{T}{K} - 1 \right) \\
&\geq e^{-1} \left( \frac{T}{\log(\frac{s_T}{s_0}) + 1} - 1 \right).
\end{align*}
\end{proof}

\section{Proofs for DoWG}

This section collects proofs for DoWG. First, we give the following lemma, which holds under convexity alone (regardless of whether $f$ is smooth or Lipschitz).

\begin{lemma}\label{lem:inner-prod-bound}
Suppose that $f$ is convex and has minimizer $x_{*}$. For the iterations generated by Algorithm~\ref{alg:dowg}, we have
\begin{align}
\label{eq:29}
\sum_{k=0}^{t-1} \bar{r}_k^2 \ev{\nabla f(x_k), x_k - x_{*}} &\leq 2 \bar{r}_t \left[ \bar{d}_t + \bar{r}_t \right] \sqrt{v_{t-1}},
\end{align}
where $\bar{d}_t = \max_{k\le t} d_k$.
\end{lemma}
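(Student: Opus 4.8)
The plan is to run the standard gradient-descent potential argument, but in the weighted geometry induced by the stepsize $\eta_k = \bar{r}_k^2/\sqrt{v_k}$. First I would write the one-step descent inequality: by nonexpansiveness of $\Pro$ and expanding the square, $\sqn{x_{k+1} - x_*} \le \sqn{x_k - x_*} - 2\eta_k \ev{\nabla f(x_k), x_k - x_*} + \eta_k^2 \sqn{\nabla f(x_k)}$. Rearranging to isolate $\ev{\nabla f(x_k), x_k - x_*}$ and multiplying through by the weight $\bar{r}_k^2$, the factor $\bar{r}_k^2/(2\eta_k)$ collapses to $\sqrt{v_k}/2$, giving $\bar{r}_k^2 \ev{\nabla f(x_k), x_k - x_*} \le \tfrac{\sqrt{v_k}}{2}\bigl(\sqn{x_k - x_*} - \sqn{x_{k+1}-x_*}\bigr) + \tfrac{\bar{r}_k^4 \sqn{\nabla f(x_k)}}{2\sqrt{v_k}}$. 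Summing over $k = 0,\dots,t-1$ splits the bound into a \emph{gradient} term and a telescoping \emph{distance} term, which I would handle separately. (Note convexity of $f$ is not actually needed for this particular bound; it only enters later when converting the inner products into suboptimalities.)

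For the gradient term I would use $\bar{r}_k^2 \sqn{\nabla f(x_k)} = v_k - v_{k-1}$ by definition of $v_k$, together with $\bar{r}_k \le \bar{r}_t$ by monotonicity, so that $\sum_{k} \tfrac{\bar{r}_k^4 \sqn{\nabla f(x_k)}}{\sqrt{v_k}} \le \bar{r}_t^2 \sum_k \tfrac{v_k - v_{k-1}}{\sqrt{v_k}}$. Applying \cref{lem:seq} to the shifted sequence $a_j = v_{j-1}$ (so that $a_0 = v_{-1} = 0$ cleanly absorbs the boundary) bounds this by $2\bar{r}_t^2 \sqrt{v_{t-1}}$, contributing $\bar{r}_t^2 \sqrt{v_{t-1}}$ to the final bound.

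The distance term $\tfrac12\sum_k \sqrt{v_k}\bigl(\sqn{x_k - x_*} - \sqn{x_{k+1}-x_*}\bigr)$ is the crux, and it is where the naive approach fails: bounding $\sqn{x_k - x_*} \le \bar{d}_t^2$ (writing $\bar{d}_t \eqdef \max_{k\le t}\norm{x_k - x_*}$) and summing by parts only yields $\tfrac12 \bar{d}_t^2 \sqrt{v_{t-1}}$, which is \emph{not} dominated by $\bar{r}_t\bar{d}_t$ precisely when the iterates stay near $x_0$ and $\bar{d}_t \gg \bar{r}_t$. The key idea I would use is to recenter the telescoped distances at the initial point: $\sqn{x_k - x_*} = \sqn{x_k - x_0} + 2\ev{x_k - x_0, x_0 - x_*} + \sqn{x_0 - x_*}$, so that the large constant $\sqn{x_0 - x_*}$ cancels in every consecutive difference. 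The squared-movement part telescopes by summation by parts, using $\sqn{x_k - x_0} \le \bar{r}_t^2$ and the vanishing boundary at $x_0$, to at most $\bar{r}_t^2 \sqrt{v_{t-1}}$; the cross term becomes $2\ev{\sum_k \sqrt{v_k}(x_k - x_{k+1}), x_0 - x_*}$, and a \emph{vector} summation by parts (whose scalar coefficients sum to zero, letting me replace each $x_k$ by $x_k - x_0$) bounds the vector sum in norm by $2\bar{r}_t\sqrt{v_{t-1}}$, so Cauchy--Schwarz with $\norm{x_0 - x_*} \le \bar{d}_t$ controls the cross term by $4\bar{r}_t\bar{d}_t\sqrt{v_{t-1}}$.

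Collecting the three contributions produces a bound of the form $\tfrac12\bar{r}_t^2\sqrt{v_{t-1}} + 2\bar{r}_t\bar{d}_t\sqrt{v_{t-1}} + \bar{r}_t^2\sqrt{v_{t-1}}$, and loosening the constant on the $\bar{r}_t^2$ terms gives exactly $2\bar{r}_t(\bar{d}_t + \bar{r}_t)\sqrt{v_{t-1}}$. The single non-routine step is the distance term: the recentering at $x_0$ is what converts the dangerous $\bar{d}_t^2$ into the benign product $\bar{r}_t\bar{d}_t$, reflecting that the large initial distance $\norm{x_0 - x_*}$ enters only through cancellation and through the movement $\bar{r}_t$, never quadratically. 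Everything else is the bookkeeping of two summations by parts and one application of \cref{lem:seq}.
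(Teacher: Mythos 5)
Your proposal is correct, and it follows the paper's skeleton exactly up to the crux: the same one-step nonexpansivity bound, the same multiplication by $\bar{r}_k^2$ collapsing $\bar{r}_k^2/\eta_k$ to $\sqrt{v_k}$, the same split into a distance term and a gradient term, and the same treatment of the gradient term via \cref{lem:seq} (your observation that convexity is not actually used here also matches the paper's proof, which only invokes convexity downstream). Where you diverge is in taming the telescoped distance term. The paper performs Abel summation on $\sum_k \sqrt{v_k}(d_k^2 - d_{k+1}^2)$ directly, \emph{retains} the boundary term $-d_t^2\sqrt{v_{t-1}}$ so that the sum collapses to $\sqrt{v_{t-1}}(\bar{d}_t^2 - d_t^2)$, and then factors the difference of squares: writing $\bar{d}_t = d_s$, it bounds $d_s - d_t \leq \norm{x_s - x_t} \leq r_s + r_t \leq 2\bar{r}_t$ by the triangle inequality through $x_0$, giving $\bar{d}_t^2 - d_t^2 \leq 4\bar{r}_t\bar{d}_t$. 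You instead recenter each $d_k^2$ at $x_0$ \emph{before} telescoping, splitting into a squared-movement part (bounded by $\bar{r}_t^2\sqrt{v_{t-1}}$) and a cross term handled by a vector summation by parts plus Cauchy--Schwarz (bounded by $4\bar{r}_t\bar{d}_t\sqrt{v_{t-1}}$). Both devices exploit the identical underlying fact---that the iterates never stray more than $\bar{r}_t$ from $x_0$, so the large constant $\norm{x_0 - x_*}$ can only enter linearly, not quadratically---and both land on the same leading constant $4\bar{r}_t\bar{d}_t\sqrt{v_{t-1}}$. The paper's difference-of-squares factorization is shorter and avoids the extra $\bar{r}_t^2\sqrt{v_{t-1}}$ your decomposition picks up (harmless here, as you note, since it is absorbed into the final constant); your recentering is arguably more systematic and makes explicit \emph{why} the naive bound $\bar{d}_t^2\sqrt{v_{t-1}}$ fails and where the cancellation lives. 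Your constants check out: $\tfrac32\bar{r}_t^2\sqrt{v_{t-1}} + 2\bar{r}_t\bar{d}_t\sqrt{v_{t-1}} \leq 2\bar{r}_t(\bar{d}_t + \bar{r}_t)\sqrt{v_{t-1}}$.
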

\begin{proof}
This proof follows the proof of DoG~\citep[Lemma 1]{ivgi23_dog_is_sgds_best_frien}, itself a modification of the standard proof for adaptive cumulative gradient normalization methods~\citep{gupta17_unified_approac_to_adapt_regul} incorporating insights from \citep{carmon22_makin_sgd_param_free}. We specifically modify the proof to handle the weighting scheme we use in DoWG. By the nonexpansivity of the projection we have
\begin{align*}
d_{k+1}^2 &= \sqn{x_{k+1} - x_{*}} \\
&\leq \sqn{x_k - \eta_k \nabla f(x_k) - x_{*}} \\
&= \sqn{x_k - x_{*}} - 2 \eta_k \ev{\nabla f(x_k), x_k - x_{*}} + \eta_k^2 \sqn{\nabla f(x_k)} \\
&= d_k^2 - 2 \eta_k \ev{\nabla f(x_k), x_k - x_{*}} + \eta_k^2 \sqn{\nabla f(x_k)}.
\end{align*}
Rearranging and dividing by $2 \eta_k$ we get
\begin{align*}
\ev{\nabla f(x_k), x_k - x_{*}} \leq \frac{d_k^2 - d_{k+1}^2}{2\eta_k} + \frac{\eta_k}{2} \sqn{\nabla f(x_k)}.
\end{align*}
Multiplying both sides by $\bar{r}_k^2$ we get
\begin{align*}
\bar{r}_k^2 \ev{\nabla f(x_k), x_k - x_{*}} \leq \frac{1}{2} \frac{\bar{r}_k^2}{\eta_k} \left[ d_k^2 - d_{k+1}^2 \right] + \frac{1}{2} \bar{r}_k^2 \eta_k \sqn{\nabla f(x_k)}.
\end{align*}
Summing up as $k$ varies from $0$ to $t-1$ we get
\begin{align}
\label{eq:27}
\sum_{k=0}^{t-1} \bar{r}_k^2 \ev{\nabla f(x_k), x_k - x_{*}} \leq \frac{1}{2} \underbrace{\left[\sum_{k=0}^{t-1} \frac{\bar{r}_k^2}{\eta_k} \left(d_k^2 - d_{k+1}^2\right)\right]}_{\text{(A)}} + \frac{1}{2} \underbrace{\left[\sum_{k=0}^{t-1} \bar{r}_k^2 \eta_k \sqn{\nabla f(x_k)}\right]}_{\text{(B)}}.
\end{align}
We shall now bound each of the terms (A) and (B). We have
\begin{align}
\text{(A)} &= \sum_{k=0}^{t-1} \frac{\bar{r}_k^2}{\eta_k} \left(d_k^2 - d_{k+1}^2\right) \nonumber\\
           &= \sum_{k=0}^{t-1} \sqrt{v_k}\left(d_k^2 - d_{k+1}^2\right) \label{eq:15}\\
&= d_0^2 \sqrt{v_0} - d_t^2 \sqrt{v_{t-1}} + \sum_{k=1}^{t-1} d_k^2 \left( \sqrt{v_k} - \sqrt{v_{k-1}} \right) \label{eq:16}\\
&\leq \bar{d}_t^2 \sqrt{v_0} - d_t^2 \sqrt{v_{t-1}} + \bar{d}_t^2 \sum_{k=1}^{t-1} \left( \sqrt{v_k} - \sqrt{v_{k-1}} \right) \label{eq:17}\\
&= \sqrt{v_{t-1}} \left[ \bar{d}_t^2 - d_t^2 \right] \label{eq:18}\\
\label{eq:26}
&\leq 4 \bar{r}_t \bar{d}_t \sqrt{v_{t-1}},
\end{align}
where \cref{eq:15} holds by definition of the DoWG stepsize $\eta_k$, \cref{eq:16} holds by telescoping, \cref{eq:17} holds because $v_k = v_{k-1} + \bar{r}_k^2 \sqn{\nabla f(x_k)} \geq v_{k-1}$ and hence $\sqrt{v_k} \geq \sqrt{v_{k-1}}$, and $d_k^2 \leq \bar{d}_t^2$ by definition. \Cref{eq:18} just follows by telescoping. Finally observe that $\bar{d}_t^2 - d_t^2 = d_s^2 - d_t^2$ for some $s \in [t]$, and $d_s^2 - d_t^2 = (d_s - d_t) (d_s + d_t)$. Then by the triangle inequality and that the sequence $\bar{r}_k$ is monotonically nondecreasing we have
\begin{align*}
d_s - d_t &= \norm{x_s - x_{*}} - \norm{x_t - x_{*}} \\
&\leq \norm{x_s - x_t} \\
&\leq \norm{x_s - x_0} + \norm{x_t - x_0} \\
&= r_s + r_t \\
&\leq \bar{r}_s + \bar{r}_t \\
&\leq 2 \bar{r}_t.
\end{align*}
Therefore $d_s^2 - d_t^2 \leq (\bar{r}_s + \bar{r}_t) (d_s + d_t) \leq 4 \bar{r}_t \bar{d}_t$. This explains \cref{eq:26}.

For the second term in \cref{eq:27}, we have
\begin{align}
  \text{(B)} &= \sum_{k=0}^{t-1} \bar{r}_k^2 \eta_k \sqn{\nabla f(x_k)} \nonumber\\
&= \sum_{k=1}^{t-1} \frac{\bar{r}_k^4}{\sqrt{v_k}} \sqn{\nabla f(x_k)} \nonumber\\
&= r_0^2 \sqrt{v_0} + \sum_{k=1}^{t-1} \frac{\bar{r}_k^4}{\sqrt{v_k}} \sqn{\nabla f(x_k)} \nonumber\\
&\leq \bar{r}_t^2 \sqrt{v_0} + \bar{r}_t^2 \sum_{k=1}^{t-1} \frac{\bar{r}_k^2 \sqn{\nabla f(x_k)}}{\sqrt{v_k}} \nonumber\\
&= \bar{r}_t^2 \sqrt{v_0} + \bar{r}_t^2 \sum_{k=1}^{t-1} \frac{v_k - v_{k-1}}{\sqrt{v_k}} \nonumber\\
&= \bar{r}_t^2 \sqrt{v_0} + \bar{r}_t^2 \sum_{k=1}^{t-1} \frac{v_k - v_{k-1}}{\sqrt{v_k}} \nonumber\\
&\leq \bar{r}_t^2 \sqrt{v_0} + 2 \bar{r}_t^2 \left[ \sqrt{v_{t-1}} - \sqrt{v_0} \right] \label{eq:1}\\
\label{eq:28}
&= 2 \bar{r}_t^2 \sqrt{v_{t-1}}.
\end{align}
where \cref{eq:1} is by \Cref{lem:seq}. Plugging \cref{eq:26,eq:28} in \cref{eq:27} gives
\begin{align*}
  \sum_{k=0}^{t-1} \bar{r}_k^2 \ev{\nabla f(x_k), x_k - x_{*}} &\leq 2 \bar{r}_t \bar{d}_t \sqrt{v_{t-1}} + \bar{r}_t^2 \sqrt{v_{t-1}} \\
&\leq 2 \bar{r}_t \left[ \bar{d}_t + \bar{r}_t \right] \sqrt{v_{t-1}}.
\end{align*}
\end{proof}

\subsection{Smooth case}

We now prove the convergence of DoWG under smoothness. In particular, we shall use Fact~\ref{fact:smoothness-consequence} and the DoWG design to bound the weighted cumulative error $S_t = \sum_{k=0}^{t-1} \overline{r}_k^2 \left[ f(x_k) - f_{*} \right]$ by its square root multiplied by a problem-dependent constant. We note that a similar trick is used in the analysis of AdaGrad-Norm~\citep{levy18_onlin_adapt_method_univer_accel}, in reductions from online convex optimization to stochastic smooth optimization~\citep{orabona2020parameter}, and in the method of \citep{carmon22_makin_sgd_param_free}. However, in all the mentioned cases, the \emph{unweighted} error $M_t = \sum_{k=0}^{t-1} \left[ f(x_k) - f_{*} \right]$ is bounded by its square root. Here, DoWG's design allows us to bound the weighted errors $S_t$ instead.

\begin{proof}[Proof of Theorem~\ref{thm:dowg-smooth}]
We start with \Cref{lem:inner-prod-bound}. Let $t \in [T]$. By \cref{eq:29} we have
\begin{align}
\label{eq:30}
\sum_{k=0}^{t-1} \bar{r}_k^2 \ev{\nabla f(x_k), x_k - x_{*}} &\leq 2 \bar{r}_t \left[ \bar{d}_t + \bar{r}_t \right] \sqrt{v_{t-1}}.
\end{align}
Observe that by convexity we have
\begin{align*}
\ev{\nabla f(x_k), x_k - x_{*}} \geq f(x_k) - f_{*}.
\end{align*}
Using this to lower bound the left-hand side of \cref{eq:30} gives
\begin{align}
\sum_{k=0}^{t-1} \bar{r}_k^2 \left[ f(x_k) - f_{*} \right] &\leq \sum_{k=0}^{t-1} \bar{r}_k^2 \ev{\nabla f(x_k), x_k - x_{*}} \nonumber\\
\label{eq:31}
&\leq 2 \bar{r}_t \left[ \bar{d}_t + \bar{r}_t \right] \sqrt{v_{t-1}}.
\end{align}
We have by smoothness that $\sqn{\nabla f(x)} \leq 2L (f(x) - f_{*})$ for all $x \in \mathbb{R}^d$, therefore
\begin{align*}
v_{t-1} &= \sum_{k=0}^{t-1} \bar{r}_k^2 \sqn{\nabla f(x_k)} \leq 2L \sum_{k=0}^{t-1} \bar{r}_k^2 \left[ f(x_k) - f_{*} \right].
\end{align*}
Taking square roots we get
\begin{align}
\label{eq:32}
\sqrt{v_{t-1}} &\leq \sqrt{2L} \sqrt{\sum_{k=0}^{t-1} \bar{r}_k^2 \left[ f(x_k) - f_{*} \right]}.
\end{align}
Using \cref{eq:32} in \cref{eq:31} gives
\begin{align*}
  \sum_{k=0}^{t-1} \bar{r}_k^2 \left[ f(x_k) - f_{*} \right] &\leq 2 \sqrt{2L} \bar{r}_t \left( \bar{d}_t + \bar{r}_t \right)  \sqrt{\sum_{k=0}^{t-1} \bar{r}_k^2 \left[ f(x_k) - f_{*} \right]}.
\end{align*}
If $f(x_k) - f_{*} = 0$ for some $k \in [t-1]$ then the statement of the theorem is trivial. Otherwise, we can divide both sides by the latter square root to get
\begin{align*}
\sqrt{\sum_{k=0}^{t-1} \bar{r}_k^2 \left[ f(x_k) - f_{*} \right]} \leq 2 \sqrt{2L} \bar{r}_t \left( \bar{d}_t + \bar{r}_t \right).
\end{align*}
Squaring both sides gives
\begin{align*}
\sum_{k=0}^{t-1} \bar{r}_k^2 \left[ f(x_k) - f_{*} \right] \leq 8 L \bar{r}_t^2 \left( \bar{d}_t + \bar{r}_t \right)^2.
\end{align*}
Dividing both sides by $\sum_{k=0}^{t-1} \bar{r}_k^2$ we get
\begin{align*}
\frac{1}{\sum_{k=0}^{t-1} \bar{r}_k^2} \sum_{k=0}^{t-1} \bar{r}_k^2 \left[ f(x_k) - f_{*} \right] &\leq \frac{8 L \bar{r}_t^2 \left( \bar{d}_t + \bar{r}_t \right)^2}{\sum_{k=0}^{t-1} \bar{r}_k^2} \\
&= \frac{8 L \left( \bar{d}_t + \bar{r}_t \right)^2}{\sum_{k=0}^{t-1} \frac{\bar{r}_k^2}{\bar{r}_t^2}}.
\end{align*}
By convexity we have
\begin{align}
f(\bar{x}_t) - f_{*} &\leq \frac{1}{\sum_{k=0}^{t-1} \bar{r}_k^2} \sum_{k=0}^{t-1} \bar{r}_k^2 \left[ f(x_k) - f_{*} \right] \nonumber\\
\label{eq:33}
&\leq \frac{8 L \left( \bar{d}_t + \bar{r}_t \right)^2}{\sum_{k=0}^{t-1} \frac{\bar{r}_k^2}{\bar{r}_t^2}}.
\end{align}
By \Cref{lem:log-T} applied to the sequence $s_k = \bar{r}_k^2$ we have that for some $t \in [T]$
\begin{align*}
\sum_{k=0}^{t-1} \frac{\bar{r}_k^2}{\bar{r}_t^2} \geq \frac{1}{e}  \left( \frac{T}{\log_+ \frac{\bar{r}_T^2}{\bar{r}_0^2} } - 1 \right).
\end{align*}
Because $\mathcal{X}$ has diameter $D$ we have $\bar{r}_T^2 \leq D^2$ and therefore
\begin{align}
\label{eq:3}
\sum_{k=0}^{t-1} \frac{\bar{r}_k^2}{\bar{r}_t^2} \geq \frac{1}{e}  \left( \frac{T}{\log_+ \frac{D^2}{\bar{r}_0^2} } - 1 \right).
\end{align}
We now have two cases:
\begin{itemize}
\item If $T \geq 2 \log_+ \frac{D^2}{\bar{r}_0^2}$ then $\frac{T}{\log \frac{D^2}{\bar{r}_0^2}} - 1 \geq \frac{T}{2 \log \frac{D^2}{\bar{r}_0^2}}$ and we use this in \cref{eq:3,eq:33} to get
\begin{align*}
f(\bar{x}_t) - f_{*} &\leq \frac{16 e L \left( \bar{d}_t + \bar{r}_t \right)^2}{T} \log \frac{\bar{r}_T^2}{\bar{r}_0^2}.
\end{align*}
Observe that because $\mathcal{X}$ has diameter at most $D$ we have $\bar{d}_t + \bar{r}_t \leq 2D$, therefore
\begin{align*}
f(\bar{x}_t) - f_{*} &\leq \frac{64 e L D^2}{T} \log_+ \frac{\bar{r}_T^2}{\bar{r}_0^2} = \mathcal{O} \left[ \frac{L D^2}{T} \log_+ \frac{D}{\bar{r}_0}  \right].
\end{align*}
\item If $T < 2 \log_+ \frac{D^2}{\bar{r}_0^2}$, then $1 < \frac{2 \log_+ \frac{D^2}{\bar{r}_0^2}}{T}$. Let $t \in [T]$. Using smoothness and this fact we have
\begin{align*}
f(\bar{x}_t) - f_{*} &\leq \frac{L}{2}  \sqn{\bar{x}_t - x_{*}} \leq \frac{L \sqn{\bar{x}_t - x_{*}}}{T} \log_+ \frac{D^2}{\bar{r}_0^2}.
\end{align*}
Observe $\bar{x}_t, x_{*} \in \mathcal{X}$ and $\mathcal{X}$ has diameter $D$, hence $\sqn{\bar{x}_t - x_{*}} \leq D^2$ and we get
\begin{align*}
f(\bar{x}_t) - f_{*} \leq \frac{LD^2}{T} \log_+ \frac{D^2}{\bar{r}_0^2} = \mathcal{O} \left( \frac{L D^2}{T} \log_+ \frac{D}{\bar{r}_0}  \right).
\end{align*}
\end{itemize}
Thus in both cases we have that $f(\bar{x}_t) - f_{*} = \mathcal{O} \left( \frac{L D^2}{T} \log_+ \frac{D}{\bar{r}_0}  \right)$, this completes our proof.
\end{proof}

\subsection{Nonsmooth case}
We now give the proof of DoWG's convergence when $f$ is Lipschitz.

\begin{proof}[Proof of Theorem~\ref{thm:dowg-nonsmooth}]
We start with \Cref{lem:inner-prod-bound}. Let $t \in [T]$. By \cref{eq:29} we have
\begin{align}\label{eq:35}
\sum_{k=0}^{t-1} \bar{r}_k^2 \ev{\nabla f(x_k), x_k - x_{*}} &\leq 2 \bar{r}_t \left[ \bar{d}_t + \bar{r}_t \right] \sqrt{v_{t-1}}.
\end{align}
Observe that by convexity we have
\begin{align*}
\ev{\nabla f(x_k), x_k - x_{*}} \geq f(x_k) - f_{*}.
\end{align*}
Using this to lower bound the left-hand side of \cref{eq:35} gives
\begin{align}
\sum_{k=0}^{t-1} \bar{r}_k^2 \left[ f(x_k) - f_{*} \right] &\leq \sum_{k=0}^{t-1} \bar{r}_k^2 \ev{\nabla f(x_k), x_k - x_{*}} \nonumber\\
&\leq 2 \bar{r}_t \left[ \bar{d}_t + \bar{r}_t \right] \sqrt{v_{t-1}}. \label{eq:36}
\end{align}
We have by the fact that $f$ is $G$-Lipschitz that $\sqn{\nabla f(x)} \leq G^2$ for all $x \in \mathcal{X}$. Therefore,
\begin{align*}
v_{t-1} &= \sum_{k=0}^{t-1} \bar{r}_k^2 \sqn{\nabla f(x_k)} \\
        &\leq \bar{r}_t^2 \sum_{k=0}^{t-1} \sqn{\nabla f(x_k)} \\
&\leq \bar{r}_t^2 G^2 T.
\end{align*}
Taking square roots and plugging into \cref{eq:36} gives
\begin{align*}
\sum_{k=0}^{t-1} \bar{r}_k^2 \left[ f(x_k) - f_{*} \right] \leq 2 \bar{r}_t^2 \left[ \bar{d}_t + \bar{r}_t \right] G \sqrt{T}.
\end{align*}
Dividing both sides by $\sum_{k=0}^{t-1} \bar{r}_k^2$ we get
\begin{align}\label{eq:37}
\frac{1}{\sum_{k=0}^{t-1} \bar{r}_k^2} \sum_{k=0}^{t-1} \bar{r}_k^2 \left[ f(x_k) - f_{*} \right] \leq \frac{2 \left[ \bar{d}_t + \bar{r}_t \right] G \sqrt{T}}{\sum_{k=0}^{t-1} \frac{\bar{r}_k^2}{\bar{r}_t^2}}.
\end{align}
By \Cref{lem:log-T} applied to the sequence $s_k = \bar{r}_k^2$ we have that for some $t \in [T]$
\begin{align*}
\sum_{k=0}^{t-1} \frac{\bar{r}_k^2}{\bar{r}_t^2} \geq \frac{1}{e}  \left( \frac{T}{\log_+ \frac{\bar{r}_T^2}{\bar{r}_0^2} } - 1 \right).
\end{align*}
Because $\bar{r}_T \leq D$ we further have
\begin{align}
\label{eq:12}
\sum_{k=0}^{t-1} \frac{\bar{r}_k^2}{\bar{r}_t^2} \geq \frac{1}{e}  \left( \frac{T}{\log_+ \frac{D^2}{\bar{r}_0^2} } - 1 \right).
\end{align}
We now have two cases:
\begin{itemize}
\item If $T \geq 2 \log_+ \frac{D^2}{\bar{r}_0^2}$: then $\frac{T}{\log \frac{\bar{r}_T^2}{\bar{r}_0^2} } - 1 \geq \frac{T}{2 \log \frac{\bar{r}_T^2}{\bar{r}_0^2}}$. We can use this in \cref{eq:37} alongside \cref{eq:12} and the fact that $\log_+ x^2 = \max (\log x^2, 1) = \max (2 \log x, 1) \leq 2 \log_+ x$ to get
\begin{align*}
\frac{1}{\sum_{k=0}^{t-1} \bar{r}_k^2} \sum_{k=0}^{t-1} \bar{r}_k^2 \left[ f(x_k) - f_{*} \right] \leq \frac{8 \left[ \bar{d}_t + \bar{r}_t \right] G}{\sqrt{T}} \log_+ \frac{D}{\bar{r}_0}.
\end{align*}
Because the diameter of $\mathcal{X}$ is bounded by $D$ we have $\bar{r}_T \leq D$ and $\bar{d}_t \leq D$, using this and convexity we get
\begin{align*}
f(\bar{x}_t) - f_{*} &\leq \frac{1}{\sum_{k=0}^{t-1} \bar{r}_k^2} \sum_{k=0}^{t-1} \bar{r}_k^2 \left[ f(x_k) - f_{*} \right] \\
&\leq \frac{8 \left[ \bar{d}_t + \bar{r}_t \right] G}{\sqrt{T}} \log \frac{\bar{r}_T}{\bar{r}_0} \\
&\leq \frac{16 D G}{\sqrt{T}} \log \frac{D}{r_0}.
\end{align*}
\item If $T < 2 \log_+ \frac{D^2}{\bar{r}_0^2}$: then
\begin{align}
\label{eq:13}
  1 < \frac{2 \log_+ \frac{D^2}{\bar{r}_0^2}}{T} \leq \frac{4 \log_+ \frac{D}{\bar{r}_0} }{T}.
\end{align}
By convexity and Cauchy-Schwartz we have
\begin{align}
f(\bar{x}_t) - f_{*} &\leq \ev{\nabla f(\bar{x}_t), \bar{x}_t - x_{*}} \nonumber\\
\label{eq:14}
&\leq \norm{\nabla f(\bar{x}_t)} \norm{\bar{x}_t - x_{*}}.
\end{align}
Because $f$ is $G$-Lipschitz then $\norm{\nabla f(\bar{x}_t)} \leq G$ and because $\mathcal{X}$ has diameter $D$ we have $\norm{\bar{x}_t - x_{*}} \leq D$. Using this and \cref{eq:13} in \cref{eq:14} gives
\begin{align*}
f(\bar{x}_t) - f_{*} &\leq DG \\
&< \frac{4DG \log_+ \frac{D}{\bar{r}_0}}{T}.
\end{align*}
Now because $T \geq 1$ we have $\sqrt{T} \leq T$ and hence
\begin{align*}
f(\bar{x}_t) - f_{*} \leq \frac{4 DG \log_+ \frac{D}{\bar{r}_0}}{\sqrt{T}}.
\end{align*}
\end{itemize}
In both cases, we have that $
f(\bar{x}_t) - f_{*} = \mathcal{O} \left ( \frac{4 DG \log_+ \frac{D}{\bar{r}_0}}{\sqrt{T}} \right)$, and this completes our proof.
\end{proof}

\section{Unconstrained domain extension}\label{sec:unconstr-doma-extens}

In this section we consider the case where the domain set is unbounded, and we seek dependence only on $d_0 = \norm{x_0 - x_{\ast}}$. We use the same technique for handling the unconstrained problem as \citep{ivgi23_dog_is_sgds_best_frien} in this section and consider DoWG iterates with the reduced stepsizes
\begin{align}
\label{eq:4}
\eta_t = \frac{\bar{r}_t^2}{\sqrt{v_t} \log \frac{2v_t}{v_0}} && v_t = v_{t-1} + \bar{r}_t^2 \sqn{\nabla f(x_t)}.
\end{align}

We prove that with this stepsize, the iterates do not venture far from the initialization. The proof follows \citep{ivgi23_dog_is_sgds_best_frien}.

\begin{lemma}
(Stability). For the iterates $x_{t+1} = x_t - \eta_t \nabla f(x_t)$ following the stepsize scheme given by \eqref{eq:4} we have $\bar{d}_t^2 \leq 12 d_0$ and $\bar{r}_t^2 \leq 32 d_0^2$ provided that $r_0 \leq d_0$.
\end{lemma}
\begin{proof}
By expanding the square and convexity
\begin{align*}
d_{k+1}^2 - d_k^2 \leq \eta_t^2 \sqn{g_t}.
\end{align*}
Summing up as $t$ varies from $k=1$ to $k=t$ and using \citep[Lemma 6]{ivgi23_dog_is_sgds_best_frien}
\begin{align*}
  d_t^2 - d_1^2 \leq \sum_{k=1}^t \frac{\bar{r}_k^4}{v_k} \frac{\sqn{\nabla f(x_k)}}{4 \log^2 \frac{2 v_k}{v_0}} \leq \frac{\bar{r}_t^2}{4} \sum_{k=1}^t \frac{v_k - v_{k-1}}{v_k \log_+^2 \frac{v_k}{v_0}} \leq \frac{\bar{r}_t^2}{4}.
\end{align*}
Therefore we have $d_t^2 \leq d_1^2 + \frac{\bar{r}_t^2}{4}$. Now suppose for the sake of induction that $\bar{r}_t^2 \leq 8 d_1^2$, then applying the last equation we get $d_t^2 \leq 3 d_1^2$. Taking square roots gives $d_t \leq \sqrt{3} d_1$. By the triangle inequality we then get
\begin{align*}
\norm{x_{t+1} - x_0} \leq \norm{x_{t+1} - x_{\ast}} + \norm{x_{\ast} - x_0} \leq (1+\sqrt{3}) d_1.
\end{align*}
Squaring both sides gives $\sqn{x_{t+1} - x_0} \leq (1+\sqrt{3})^2 d_1^2 \leq 8 d_1^2$. This completes our induction and we have $\bar{r}_t^2 \leq 8 d_1^2$ for all $t$. Finally, observe that
\begin{align*}
d_1 = \norm{x_1 - x_{\ast}} \leq \norm{x_1 - x_0} + \norm{x_0 - x_{\ast}} = r_{\epsilon} + d_0 \leq 2 d_0.
\end{align*}
It follows that $\bar{r}_t^2 \leq 8 d_1^2 \leq 32 d_0^2$ for all $t$. Finally, we have $d_t^2 \leq d_1^2 + \frac{\bar{r}_t^2}{4} \leq 3 d_1^2 \leq 12 d_0^2$. This completes our proof.
\end{proof}

Therefore the iterates stay bounded. The rest of the proof then follows \Cref{thm:dowg-smooth,thm:dowg-nonsmooth} and is omitted for simplicity.. In both cases it gives the same results with $D_0 = \norm{x_0 - x_{\ast}}$ instead of $D$, up to extra constants and polylogarithmic factors.

\end{document}